\newif\ifdraft \drafttrue
\newif\iffull \fulltrue
\newcommand{\dong}[1]{\textcolor{blue}{#1}} 
\newcommand{\INDSTATE}[1][1]{\STATE\hspace{#1\algorithmicindent}}
\newtheorem{claim}{Claim}
\newtheorem{lemma}{Lemma}
\newtheorem{theorem}{Theorem}
\newtheorem*{claim-non}{Claim}
\newtheorem*{theorem-non}{Theorem}
\newtheorem{corollary}{Corollary}[theorem]
\newcommand{\sw}[1]{\ifdraft \textcolor{blue}{[Steven: #1]}\fi}
\newcommand{\ar}[1]{\ifdraft \textcolor{blue}{[Aaron: #1]}\fi}
\newcommand{\zs}[1]{\ifdraft \textcolor{ForestGreen}{[ Zach: #1]}\fi}
\newcommand{\djs}[1]{\ifdraft \textcolor{red}{[Jinshuo: #1]}\fi}
\newcommand{\xb}{\langle \hat{x}, \beta \rangle}
\newcommand{\xbb}{\langle \hat{x}(\beta), \beta \rangle}
\newcommand{\Rd}{\mathbb{R}^d}
\newcommand{\innprod}[2]{\langle{#1},{#2}\rangle}
\theoremstyle{definition}
\theoremstyle{remark}
\newtheorem{remark}{Remark}
\DeclareMathOperator*{\argmax}{arg\,max}
\DeclareMathOperator*{\argmin}{arg\,min}
\title{Strategic Classification from Revealed Preferences}
\author{Jinshuo Dong \and Aaron Roth \and Zachary Schutzman \and Bo Waggoner \and Zhiwei Steven Wu}
\begin{document}
\maketitle

\begin{abstract}
We study an online linear classification problem, in which the data is generated by strategic agents who manipulate their features in an effort to change the classification outcome. In rounds, the learner deploys a classifier, and an adversarially chosen agent arrives, possibly manipulating her features to optimally respond to the learner. The learner has no knowledge of the agents' utility functions or ``real'' features, which may vary widely across agents. Instead, the learner is only able to observe their ``revealed preferences'' --- i.e. the actual manipulated feature vectors they provide. For a broad family of agent cost functions, we give a computationally efficient learning algorithm that is able to obtain diminishing ``Stackelberg regret'' --- a form of policy regret that guarantees that the learner is obtaining loss nearly as small as that of the best classifier in hindsight, even allowing for the fact that agents will best-respond differently\djs{This word is a little confusing for me...} to the optimal classifier.

\end{abstract} 

\section{Introduction}

Machine learning is typically studied under the assumption that the data distribution a classifier is deployed on is the same as the data distribution it was trained on. However, the outputs of many classification and regression problems are used to make decisions about human beings, such as whether an individual will receive a loan, be hired, be admitted to college, or whether their email will pass through a spam filter. In these settings, the individuals have a vested interest in the outcome, and so the data generating process is better modeled as part of a strategic game in which individuals edit their data to increase the likelihood of a certain outcome. Tax evaders may carefully craft their tax returns to decrease the likelihood of an audit. Home buyers may strategically sign up for more credit cards in an effort to increase their credit score. Email spammers may modify their emails in order to evade existing filters. In each of these settings, the individuals have a natural objective that they want to maximize --- they want to increase their probability of being (say) positively classified. However, they also experience a cost from performing these manipulations (tax evaders may have to pay \emph{some} tax to avoid an audit, and email spammers must balance their ability to evade spam filters with their original goal in crafting email text). These costs can be naturally modeled as the distance between the ``true'' features $x$ of the individual and the manipulated features $x'$ that he ends up sending, according to some measure. In settings of this sort, learning can be viewed as a game between a learner and the set of individuals who generate the data, and the goal of the learner is to compute an equilibrium strategy of the game (according to some notion of equilibrium) that maximizes her utility.

The relevant notion of equilibrium depends on the order of information revelation in the game. Frequently, the learner will first deploy her classifier, and then the data generating players (agents) will get to craft their data with knowledge of the learner's classifier. In a setting like this, the learner should seek to play a \emph{Stackelberg equilibrium} of the game --- i.e. she should deploy the classifier that minimizes her error \emph{after} the agents are given an opportunity to best respond to the learner's classifier. This is the approach taken by the most closely related prior work:
\cite{BS11} and
\cite{HMPW16}. Both of these papers consider a one-shot game and study how to compute the Stackelberg equilibria of this game. To do this, they necessarily assume that the learner has (almost) full knowledge of the agents' utility functions; in particular, it is assumed that the learner has access to the ``true'' distribution of agent features (before manipulation), and that the costs experienced by the agents for manipulating their data are the same for all agents and known to the learner\footnote{The particulars of the models studied in \citet{BS11} and \citet{HMPW16} differ. Br{\"u}ckner and Scheffer model a single data generation player who manipulates the data distribution, and experiences cost equal to the squared $\ell_2$ distance of his manipulation. Hardt et al. study a model in which each agent can independently manipulate his own data point, but assume that all agents experience cost as a function of the same separable cost function, known to the learner.}.

The primary point of departure in our work is to study the strategic classification problem when the learner does not know the utility functions of the agents: neither their true features $x$, nor the cost they experience for manipulation (which can now differ for each agent). In this setting, it no longer makes sense to study learning as a one-shot game; we cannot compute an equilibrium when the utility functions of the agents are unknown. Instead, we model the learning process as an iterative, online procedure. In rounds $t$, the learner proposes a classifier $\beta_t$. Then, an agent arrives, with an unknown set of \dong{$d$} ``true'' features $x_t \in \mathbb{R}^d$, and an unknown cost function $d_t:\mathbb{R}^d \times \mathbb{R}^d\rightarrow \mathbb{R}$ for manipulation. The learner observes only the manipulated set of features $\hat{x}_t \in \mathbb{R}^d$, that represent the agent's best response to $\beta_t$. After classification, the learner observes the agent's true label $y_t$, and suffers the corresponding loss. Crucially, the learner never gets to observe either $x_t$ or $d_t$, and we do not even assume that these are drawn from a distribution (they may be adversarially chosen). The only access that the learner has to these parameters is via the \emph{revealed preferences} of the agents --- the learner gets to observe the actions of the agents, which are the result of optimizing their unknown utility functions. The learner must use this information to improve her classifier.

We measure the performance of our algorithms via a quantity that we call \emph{Stackelberg regret}: informally, by comparing the average loss of the learner to the loss she could have experienced with the best fixed classifier in hindsight, \emph{taking into account that the agents would have best-responded differently had she used a different classifier}. If the learner were in fact interacting with the same agent repeatedly, or if the agents $(x_t,d_t)$ were drawn from a fixed distribution, then the guarantee of diminishing Stackelberg regret would imply the convergence to a Stackelberg equilibrium of the corresponding one-shot game. \djs{make this claim?} However, Stackelberg regret is more general, and applies even to settings in which the agents are adversarially chosen.

We add one further twist. \djs{Do we need citations here?} Previous work on strategic classification has typically assumed that all agents are strategic. However, the equilibrium solutions that result from this assumption may be undesirable. For example, in a spam classification setting, the Stackelberg-optimal classifier may attain its optimal accuracy only if all agents --- even legitimate (non-spam) senders --- actively seek to manipulate their emails to avoid the spam filter. In these settings, it would be more desirable to compute a classifier that was optimal under the assumption that spammers would attempt to manipulate their emails in order to game the classifier, but that did not assume legitimate senders would. To capture this nuance, in our model, only agents whose true label $y_t = -1$ (e.g. spammers) are strategic, and agents for whom $y_t = 1$ are non-strategic.
\subsection{Our Results and Techniques}
The problem that the learner must solve is a bi-level optimization problem in which the objective of the inner layer (the agents' maximization problem) is unknown. Even with full information, bi-level optimization problems are often NP-hard. As a first step in our solution, we seek to identify conditions under which the learner's optimization problem is convex. Under these conditions, computing an optimal solution would be tractable in the full information setting. The remaining difficulty is solving the optimization problem in our limited feedback model.

We study learners who deploy \emph{linear classifiers} $\beta_t$, and consider two natural learner loss functions: logistic loss (corresponding to logistic regression) and hinge loss (corresponding to support vector machines). The agents in our model are parameterized by target feature vectors $x_t$ and cost functions $d_t$, and will play the modified feature vector $\hat{x_t}(\beta_t)$ that maximizes their utility given $\beta_t$. We model agent utility functions as $u_t(\hat{x}, \beta) = \langle \beta, \hat{x} \rangle - d_t(x_t, \hat{x}_t)$. Using tools from convex analysis, we give general conditions on the cost functions $d_t$ that suffice to make the learner's objective convex, for both logistic and hinge loss, for all $x_t$. These conditions are satisfied by (among other classes of cost functions) any squared Mahalanobis distance and, more generally, by any norm-induced metric raised to a power greater than one.

Finally, we turn to the learner's optimization problem. Once we have derived conditions under which the learner's optimization problem is convex, we can in principle achieve quickly diminishing Stackelberg regret with any algorithm for online bandit (i.e. zeroth order) optimization that works for adversarialy chosen loss functions. However, we observe that when some of the agents are non-strategic (e.g. the non-spammers), there is additional structure that we can take advantage of.
In particular, on rounds for which the agent is non-strategic, the learner can also derive gradients for her loss function, in contrast to rounds on which the agent is strategic, where she only has access to zeroth-order feedback.
To take advantage of this, we analyze a variant of the bandit convex optimization algorithm of
\cite{flaxman2005online} which can make use of both kinds of feedback. The regret bound we obtain interpolates between the bound of \cite{flaxman2005online}, obtained when all agents are strategic, and the regret bound of online gradient descent \cite{Zin03}, obtained when no agents are strategic, as a function of the proportion of the observed agents which were strategic.
\subsection{Further Related Work}
In the \emph{adversarial} or \emph{strategic} learning literature, the most closely related works are \cite{BS11} and \cite{HMPW16}, as discussed above, which also consider notions of Stackelberg equilibrium, and make other similar modelling choices.
\cite{LC09} also model adversarial learning as a Stackelberg game. Other works in this line model the learning problem as a purely adversarial (zero sum) game as in \cite{DDSV04} for which the appropriate solution concept is a minmax equilibrium, study the simultaneous move equilibria of non-zero sum games as in \cite{BKS12, BS09}, and study the Bayes-Nash equilibria of incomplete information games given Bayesian priors on the player types as in \cite{GSBS13}. Common to all of these works is the assumption that the learner either has full knowledge of the data generator's utility functions (when Nash and Stackelberg equilibria are computed), or else knowledge of a prior distribution (when Bayes-Nash equilibria are computed). The point of departure of the current paper is to assume that the learner does \emph{not} have this knowledge, and instead only has the power to observe agent decisions in response to deployed learners.

There is a parallel literature in machine learning and algorithmic game theory focusing on the problem of \emph{learning from revealed preferences} --- which corresponds to learning from the choices that agents make when optimizing their (unknown) utility functions in response to some decision of the learner as in \cite{BV06}. This literature has primarily focused on how buyers with unknown combinatorial valuation functions make purchases in response to prices. Learning problems studied in this literature include learning to predict what purchase decisions a buyer will make in response to a set of prices drawn from an unknown distribution as in \cite{ZR12,BDMUV14}, finding prices that will maximize profit or welfare after buyers best respond as in \cite{ACDKR15,RUW16,RSUW17}, and generalizations of these problems as in \cite{JRRW16}. We study the problem of strategic classification with this sort of ``revealed preferences'' feedback.

Finally, Stackelberg games are studied extensively in the ``security games'' literature: see \cite{tambe2011security} for an overview. Most closely related to this paper is the work of \cite{security}, who develop no-regret algorithms for certain kinds of security games when ``attackers'' arrive online, using a notion of regret that is equivalent to the ``Stackelberg regret'' that we bound. This work is similar in motivation to ours: its goal is to give algorithms to compute equilibrium-like strategies for the ``defender'' without assuming that he has an unrealistic amount of knowledge about his opponents' utility functions. Technically, the work is quite different, since we are operating in very different settings. In particular, \cite{security} are primarily interested in information theoretic bounds, and do not give computationally efficient algorithms (since in general, solving Stackelberg security games is NP hard --- see \cite{korzhyk2010complexity}.)

\section{Model and Preliminaries}

We study a sequential binary classification problem in which a learner
wishes to classify a sequence of examples over $n$ rounds. The example
at each round $t$ is associated with an agent $a_t=(x_t,y_t,d_t)$,
where $x_t \in \mathbb{R}^d$ is the feature vector, $y_t \in \{-1,1\}$
is the true label and
$d_t:\mathbb{R}^d\times \mathbb{R}^d\rightarrow \mathbb{R}$ is a
distance function that maps pairs of feature vectors to costs.  We will think of $d_t(x,\hat{x})$ as the cost for the agent to change his feature vector $x$ to the feature vector $\hat{x}$. If the
example is positive ($y_t = 1$), then we say the agent is
\emph{non-strategic}, and if the example is negative ($y_t = -1$),
then we say the agent is \emph{strategic}.


Each agent $a_t$ has a utility function
$u_t(x, \beta) = \innprod{\beta}{x} - d_t(x_t,x)$. In each round $t$,
the interaction between the learner and the agent is the following:

\begin{enumerate}
    \item The learner commits to a linear classifier parameterized by $\beta_t \in K$, where $K$ is the set of feasible parameters.	

	\item An adversary, oblivious to the learner's choices up to round $t$, selects an agent $a_t=(x_t,y_t,d_t)$.
		
	\item The agent $a_t$ sends the data point $\hat{x}_t$ to the learner:
	\begin{itemize}
        \item If the agent is strategic ($y_t=-1$), then the agent
          sends the learner his best response to $\beta_t$ (ties
          broken arbitrarily):
          \begin{equation}\label{br}
            \hat{x}_t(\beta_t) \in \arg\max_x u_t(x,\beta)
          \end{equation}

        \item If the agent is non-strategic ($y_t=1$), then the agent
          does not modify the feature vector, and so sends
          $\hat{x}_t(\beta_t) = x_t$ to the learner.
	\end{itemize}

      \item The learner observes $y_t$, and experiences classification
        loss $c_t(\beta_t) = c(\hat{x}_t,y_t,\beta_t)$.
\end{enumerate}

We are mainly interested in two standard classification loss functions
$c$.  The first is logistic loss, which corresponds to logistic
regression:
$$c_{log}(\hat{x}_t,y_t,\beta_t) = \log(1+e^{-y_t\cdot \innprod{\hat{x}_t}{\beta_t}})$$
The second is hinge loss, which corresponds to a support vector machine:
$$c_h(\hat{x}_t,y_t,\beta_t) = \max(0,1-y_t\cdot\innprod{\hat{x}_t}{\beta_t})$$

The interaction between the learner and the agent in each strategic
round can be viewed as a Stackelberg game, in which the learner as the
leader plays her strategy first, and then the agent as the follower
best responds. With this observation, we define a regret notion termed
as \textit{Stackelberg regret} for measuring the performance of the
learner.\footnote{The same regret notion has also appeared
  in the context of repeated security games \citep{security}.} In words, Stackelberg regret is the difference
between the cumulative loss of the learner and the cumulative loss it would
have experienced if it had deployed the single best classifier in
hindsight $\beta^*$, for the same sequence of agents (who would have
responded differently). More formally, for a history of play involving
the $n$ agents $A=(a_1,a_2,\dots,a_n)$ and the sequence of $n$ classifiers
$B=(\beta_1,\beta_2,\dots,\beta_n)$ the Stackelberg regret is defined
to be

$$\mathcal{R}_S(A,B) = \sum\limits_{t=1}^n c(\hat{x}_t(\beta_t),y_t,\beta_t) - \min_{\beta \in K}\sum\limits_{t=1}^{n} c(\hat{x}_t(\beta),y_t,\beta) $$

Observe that the regret-minimizing classifier $\beta^*$ is a
Stackelberg equilibrium strategy for the learner in the one-shot game
in which the learner first commits to a classifier $\beta^*$, then all
the agents $(a_1,\dots,a_n)$ simultaneously respond with
$\hat{x}_t(\beta^*)$, which results in the learner experiencing classification loss equal to $\sum\limits_{i=1}^n c(\hat{x}_i(\beta^*),y_i,\beta^*)$. 

In order to derive efficient algorithms with sub-linear Stackelberg
regret, we will impose some restrictions the agents' distance
functions $d_t$ (formally described in \Cref{sec:convex}), and we will
assume the feature vectors $x_t$ have norm bounded by $R_1$.  We also assume our feasible set
$K \subseteq \mathbb{R}^d$ is convex, contains the unit
$\ell_2$-ball, and has norm bounded by $R_2$, such that for any $\beta\in K$,
$\|\beta\|_2\leq R_2$.

\section{Conditions for a Convex Learning Problem}\label{sec:convex}

%

In this section, we derive general conditions under which the learner's problem in our setting can be cast as a convex minimization problem.
At each round, the learner proposes some hypothesis $\beta$ and receives a loss that depends on the best response of the strategic agent to $\beta$.
Even when the original loss function is a convex function of $\beta$ alone, holding $\hat{x},y$ fixed, it may no longer be convex when $\hat{x}$ is chosen strategically as a function of $\beta$.

Since the learner's objective is a summation of the loss over all rounds, and the sums of convex loss functions are convex, it suffices to show convexity for the loss experienced at each fixed round $t$. In the following, we omit the subscript $t$ to avoid notational clutter.

Recall that each strategic agent $a = (x,y,d)$ (with $y=-1$), when facing a classifier $\beta$, will misreport his feature vector $x$
as $\hat{x}(\beta)$ by solving a maximization problem:
\[
\hat{x}(\beta) \in \argmax_{\hat{x}\in \Rd} u(\hat{x},x,\beta) =\argmax_{\hat{x}\in \Rd} \xb - d(\hat{x},x)
\]
where $d(\cdot,\cdot)$ is a ``distance'' function modeling the cost of deviating from the intended $x$. 

We first show that both the logistic and hinge loss objective are convex if $\xbb$ is convex in $\beta$, and then prove the convexity of $\xbb$ when the distance has the form $d(x,y) = f(x-y)$, for $f$ satisfying reasonable assumptions (stated in Theorem \ref{thm:convex}). In \Cref{sec:example}, we give a large class of examples satisfying our assumptions, as well as several examples of functions which fail to satisfy our assumptions for which the problem we are studying is ill-posed (for example because strategic agents might not have finite best responses). This shows that it is necessary to impose constraints on $d$ when studying this problem.

As the first step, consider the learner's objective function,
$$c(x,y,\beta) = h(y\xbb)=h(y\langle \hat{x}(\beta,x,y), \beta\rangle).$$
where
\[
\hat{x}(\beta,x,y)=
\left\{
\begin{array}{ll}
x, 		& y=1, \\
\argmax u(\hat{x},x,\beta), & y=-1.
\end{array}
\right.
\]
Note that we are now writing $\hat{x}(\beta,x,y)$ rather than $\hat{x}(\beta)$ to make explicit the dependence on $x$ and $y$.
Where $x$ and $y$ are clear from context and fixed, we continue to write $\hat{x}(\beta)$. We note also that $\hat{x}(\beta,x,y)$ is not necessarily a well-defined mapping since the strategic agents can break the tie arbitrarily when the $\arg\max$ is not unique. However, as we show in \Cref{thm:convex}, $\xbb$ is well-defined, which is all that is necessary for the learner's cost to be well defined.

We instantiate $h(z)$ as either logistic loss $h(z)=\log(1 + e^{-z})$ or hinge loss $h(z) = (1-z)_+$\footnote{We write $(x)_+$ to denote $\max(x,0)$.}: note that both are non-increasing and convex functions in $z$.

We will rely on the following fact in convex analysis.

\begin{lemma}[e.g.~\cite{rockafellar2015convex}, Theorem 5.1] \label{lemma:increasing-convex-convex}
  Let $g:\mathbb{R}\to\mathbb{R}$ be a non-increasing convex function
  of a single variable and $F:\Rd\to \mathbb{R}$ be convex. Then
  $g(-F(x))$ is convex in $x$.
\end{lemma}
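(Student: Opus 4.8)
The plan is to prove this directly from the definition of convexity, using a two-step composition argument that chains together the convexity of $F$, the monotonicity of $g$, and the convexity of $g$, in that order. Write $\phi(x) = g(-F(x))$. Fix $x_1, x_2 \in \Rd$ and $\lambda \in [0,1]$, and set $x_\lambda = \lambda x_1 + (1-\lambda) x_2$. The goal is the inequality $\phi(x_\lambda) \le \lambda \phi(x_1) + (1-\lambda)\phi(x_2)$.

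First I would invoke convexity of $F$ to obtain $F(x_\lambda) \le \lambda F(x_1) + (1-\lambda) F(x_2)$, and negate both sides to get
\[
-F(x_\lambda) \;\ge\; \lambda\bigl(-F(x_1)\bigr) + (1-\lambda)\bigl(-F(x_2)\bigr).
\]
Next, because $g$ is non-increasing, applying $g$ to both sides of this inequality \emph{reverses} it:
\[
g\bigl(-F(x_\lambda)\bigr) \;\le\; g\Bigl(\lambda\bigl(-F(x_1)\bigr) + (1-\lambda)\bigl(-F(x_2)\bigr)\Bigr).
\]
Finally, convexity of $g$ bounds the right-hand side by $\lambda\,g(-F(x_1)) + (1-\lambda)\,g(-F(x_2))$. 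Stringing the three steps together yields $\phi(x_\lambda) \le \lambda \phi(x_1) + (1-\lambda) \phi(x_2)$, which is exactly the convexity of $\phi$.

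The only point requiring care — and the one place the argument can silently go wrong — is the monotonicity step: one must track that ``non-increasing'' sends a $\ge$ inequality to a $\le$ inequality, so that the direction agrees with the convexity bound applied afterward. If $g$ were non-decreasing instead, the composition would fail, which is precisely why the hypothesis is stated with $-F$ and a non-increasing $g$. Everything else is a routine application of the definitions, so I would keep the writeup to these three displayed inequalities without additional machinery.
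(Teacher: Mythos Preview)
Your proof is correct. The paper does not supply its own argument for this lemma---it simply cites Rockafellar's Theorem~5.1 and moves on---so there is no ``paper's proof'' to compare against beyond the reference. Your direct three-step verification from the definition (convexity of $F$, then monotonicity of $g$ to flip the inequality, then convexity of $g$) is the standard elementary argument, and your remark about tracking the inequality direction in the monotonicity step is exactly the right thing to flag.
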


This yields the following approach.
\begin{theorem} \label{thm:sufficient-convex}
  Suppose, for all $(x,y)$, a strategic agent's best-response function $\hat{x}(\beta,x,y)$ satisfies the condition that the function $\beta \mapsto \xbb$ is convex.
  Then logistic loss and hinge loss are convex functions of $\beta$, i.e. for all $(x,y)$, both of the following are convex:
  \begin{itemize}
    \item (logistic) $c(x,y,\beta) = \log\left(1 + e^{-y \xbb}\right)$;
    \item (hinge) $c(x,y,\beta) = \left(1 - y\xbb \right)_+$.
  \end{itemize}
\end{theorem}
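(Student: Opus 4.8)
The plan is to reduce the statement directly to Lemma~\ref{lemma:increasing-convex-convex} by splitting on the sign of $y$. Write $\phi(\beta) := \xbb = \langle \hat{x}(\beta,x,y),\beta\rangle$, so that in both the strategic and non-strategic cases the per-round loss takes the form $c(x,y,\beta) = h(y\,\phi(\beta))$, where $h$ is the one-variable link function: $h(z)=\log(1+e^{-z})$ for logistic loss and $h(z)=(1-z)_+$ for hinge loss. The first thing I would record is that both choices of $h$ are non-increasing and convex on $\mathbb{R}$. For logistic loss this follows from $h'(z)=-1/(1+e^{z})<0$ and $h''(z)=e^{z}/(1+e^{z})^2>0$; for hinge loss it follows from writing $h(z)=\max(0,1-z)$ as the pointwise maximum of two non-increasing affine functions, which is both non-increasing and convex.

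With this setup the proof is a two-case application of the lemma. In the \emph{strategic case} ($y=-1$) we have $c(x,-1,\beta)=h(-\phi(\beta))$; by hypothesis $\phi$ is convex in $\beta$, so applying Lemma~\ref{lemma:increasing-convex-convex} with $g=h$ (non-increasing and convex) and $F=\phi$ (convex) immediately gives that $h(-\phi(\beta))$ is convex in $\beta$. In the \emph{non-strategic case} ($y=1$) the best response is $\hat{x}(\beta,x,1)=x$, independent of $\beta$, so $\phi(\beta)=\langle x,\beta\rangle$ is affine (hence convex) and $c(x,1,\beta)=h(\langle x,\beta\rangle)$. I would again invoke Lemma~\ref{lemma:increasing-convex-convex}, this time with $F(\beta)=-\langle x,\beta\rangle$, which is affine and therefore convex; the lemma then yields that $h(-F(\beta))=h(\langle x,\beta\rangle)$ is convex. (Equivalently, this is just the standard fact that a convex function composed with an affine map is convex.)

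Since convexity holds for every $(x,y)$ and for each of the two choices of $h$, both the logistic and hinge loss objectives are convex in $\beta$, as claimed. I do not anticipate any real obstacle in this statement: the entire argument is a bookkeeping application of the supplied lemma, and the only points requiring care are (i) tracking the sign of $y$ so that the convex inner function $\phi$ enters $h$ through a negation, matching the hypotheses of Lemma~\ref{lemma:increasing-convex-convex}, and (ii) observing that in the non-strategic case the convexity hypothesis on $\beta\mapsto\xbb$ is automatic because $\phi$ is affine, so the assumption is only binding for strategic agents. All the genuine difficulty in the development is deferred to establishing the convexity of $\beta\mapsto\xbb$ for strategic agents, which is the content of Theorem~\ref{thm:convex} rather than of this statement.
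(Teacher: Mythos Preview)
Your proposal is correct and follows essentially the same approach as the paper's own proof: split on the sign of $y$ and apply Lemma~\ref{lemma:increasing-convex-convex} in each case, using $F=\phi$ when $y=-1$ and $F(\beta)=-\langle x,\beta\rangle$ when $y=1$. The only difference is that you spell out in more detail why $h$ is non-increasing and convex, which the paper leaves implicit.
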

\begin{proof}
  For strategic agents, i.e. when $y=-1$, this follows immediately from Lemma \ref{lemma:increasing-convex-convex} and the fact that both loss functions are non-increasing convex functions of the variable $z = \xbb$.
  When $y=1$, i.e. the agent is not strategic, $c(x,y,\beta) = h(\langle \beta, x\rangle) = h(-\langle \beta, -x\rangle)$. Note that $\langle \beta, -x\rangle$ is linear (and hence convex), so the lemma applies here as well.
\end{proof}

Therefore, in order to show the convexity of the learner's loss function $c(x,y,\beta)$ for every $x,y$, it suffices to show that for any fixed $x$, $\xbb = \langle \hat{x}(\beta,x,y), \beta\rangle$ is convex in $\beta$. In the next section, we give sufficient conditions for this to be the case.

\subsection{Sufficient Conditions}
We need to recall a definition before we present the main result of this section.
\definition
{
A function $f:\mathbb{R}^d\rightarrow \mathbb{R}$ is \textit{positive homogeneous of degree $k$} if for any scalar $\alpha\in\mathbb{R}$ with $\alpha>0$ and vector $x\in\mathbb{R}^d$, we have
$$f(\alpha x) = \alpha^kf(x).$$
}
Note that $k$ need not be an integer.

\begin{theorem}\label{thm:convex}
	Let $u(\hat{x},x,\beta) = \innprod{\hat{x}}{\beta} - d(\hat{x},x)$ be the strategic agent's utility function.  If $d$ has the form $d(\hat{x},x) = f(\hat{x}-x)$ where $f:\Rd\to \mathbb{R}$ satisfies:
	\begin{itemize}
		\item $f(x)>0$ for all $x\neq0$;
		\item $f$ is convex over $\Rd$;
		\item $f$ is positive homogeneous of degree $r>1$
	\end{itemize}
	then the function $\beta \mapsto \langle \beta, \argmax\limits_{\hat{x}} u(\hat{x},x,\beta) \rangle$ is well-defined (i.e. finite and independent of the choices of maximizer) and convex.
\end{theorem}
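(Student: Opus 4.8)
The plan is to reduce the agent's maximization to a Fenchel-conjugate computation. Substituting $z = \hat{x} - x$ turns the agent's problem into
$\max_z \langle z + x, \beta\rangle - f(z) = \langle x, \beta\rangle + \sup_z\left(\langle z, \beta\rangle - f(z)\right)$,
and the supremum on the right is exactly the convex conjugate $f^*(\beta)$. So if $z^*$ is any maximizer, then $\hat{x}(\beta) = z^* + x$ and $\langle \hat{x}(\beta), \beta\rangle = \langle x, \beta\rangle + \langle z^*, \beta\rangle$. Since $\langle x, \beta\rangle$ is linear in $\beta$, everything reduces to understanding $\langle z^*, \beta\rangle$.

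First I would establish that a finite maximizer exists. Because $f$ is convex and finite on all of $\Rd$, it is continuous; since it is strictly positive on the compact unit sphere, it attains a minimum $m > 0$ there, and positive homogeneity of degree $r$ then gives $f(z) \geq m\|z\|^r$. As $r > 1$, the objective satisfies $\langle z, \beta\rangle - f(z) \leq \|\beta\|\,\|z\| - m\|z\|^r \to -\infty$, so it is coercive and the maximum is attained, making $f^*(\beta)$ finite.

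The crux is an Euler-type identity for positively homogeneous convex functions: for any subgradient $g \in \partial f(z)$ one has $\langle g, z\rangle = r\,f(z)$. I would prove this directly from the subgradient inequality $f(\alpha z) \geq f(z) + (\alpha - 1)\langle g, z\rangle$ combined with $f(\alpha z) = \alpha^r f(z)$: dividing by $(\alpha - 1)$ and letting $\alpha \to 1$ from each side yields the two matching inequalities (the sign flip for $\alpha < 1$ supplies the reverse direction). Applying this at a maximizer $z^*$, whose first-order optimality condition $0 \in \partial f(z^*) - \beta$ gives $\beta \in \partial f(z^*)$, yields $\langle z^*, \beta\rangle = r\,f(z^*)$. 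Combining with the conjugate identity $f^*(\beta) = \langle z^*, \beta\rangle - f(z^*)$ lets me solve for $\langle z^*, \beta\rangle = \frac{r}{r-1}\,f^*(\beta)$.

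This settles both claims simultaneously. The quantity $\frac{r}{r-1}\,f^*(\beta)$ depends only on $\beta$ through the value $f^*(\beta)$, not on the particular maximizer chosen, which gives well-definedness; and $\langle \hat{x}(\beta), \beta\rangle = \langle x, \beta\rangle + \frac{r}{r-1}\,f^*(\beta)$ is convex in $\beta$ because it is a linear term plus a strictly positive multiple ($\frac{r}{r-1} > 0$ since $r > 1$) of the conjugate $f^*$, which is convex as a pointwise supremum of affine functions. I expect the main technical obstacle to be handling the possible non-differentiability of $f$ cleanly: both the Euler identity and the optimality condition must be expressed through subgradients rather than gradients, whereas the convexity of $f^*$ and the final assembly are routine.
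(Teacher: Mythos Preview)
Your argument is correct and reaches the same formula as the paper, $\langle \hat{x}(\beta),\beta\rangle = \langle x,\beta\rangle + \tfrac{r}{r-1}\,f^*(\beta)$, but you get there by a genuinely different route. The paper first proves that $f^*$ is itself positively homogeneous of degree $s=\tfrac{r}{r-1}$ (its Claim~\ref{claim:homo}), then applies the Euler identity to $f^*$ using the characterization $z^*\in\partial f^*(\beta)$ of maximizers. You instead apply Euler directly to $f$, using the primal optimality condition $\beta\in\partial f(z^*)$ to get $\langle z^*,\beta\rangle = r\,f(z^*)$, and then combine with the Fenchel equality $f^*(\beta)=\langle z^*,\beta\rangle - f(z^*)$ to solve for $\langle z^*,\beta\rangle$. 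Your path is more economical: it avoids the separate lemma on homogeneity of the conjugate and never needs the duality $z^*\in\partial f^*(\beta)\Leftrightarrow \beta\in\partial f(z^*)$. The paper's route, on the other hand, exposes the dual structure (conjugate of a degree-$r$ homogeneous function is degree-$s$ homogeneous), which it later uses when computing explicit examples. Your coercivity argument for existence and finiteness is also a clean alternative to the paper's polar-coordinate computation.
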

Note that by this theorem, when the conditions hold we can speak of the function $\beta \mapsto \xbb$ without ambiguity even when there are multiple best responses $\hat{x}(\beta)$.

\begin{proof}
  The proof is broken up into a series of steps, which are described here.
  The remainder of this subsection then states each step in more detail and proves it.
  \begin{enumerate}
  	\item First, we show (Corollary \ref{cor:best}) that any best response $\hat{x}(\beta)$ satisfies $\hat{x}(\beta) = x + v$, where $v$ is some subgradient of $f^*$ at $\beta$ and $f^*$ is the convex conjugate of $f$.
  	\item Next, we show (Claim \ref{claim:finite}) that $f^*(\beta)$ is finite for all $\beta$.
  	\item Next, we prove (Claim \ref{claim:homo}) that homogeneity of $f^*$ follows from homogeneity of $f$.
  	\item Finally, we apply a slight generalization of Euler's Theorem, which says that if $f^*$ is homogenous and convex, there is a $k$ such that for any choice of subgradient $v \in \partial f^*(\beta)$ (the set of subgradients at $\beta$), we have $\langle v,\beta \rangle = kf^*(\beta)$.
          This, together with steps 2 and 3, implies that $\xbb = \langle x, \beta\rangle + kf^*(\beta)$ for any choice of best response $\hat{x}(\beta)$, and this function is well-defined and convex.
  \end{enumerate}
\end{proof}

All missing proofs appear in the appendix.



We begin with the first step. First, we rewrite the utility function:
$$u(\hat{x},x,\beta) = \innprod{\hat{x}}{\beta} - f(\hat{x}-x).$$
We perform a change of variable from $\hat{x}$ to $w = \hat{x} - x$ and write:

 $$v(w,\beta) = u(\hat{x},x,\beta) = \innprod{x}{\beta} + \innprod{w}{\beta} -f(w). $$

The first step only relies on the convexity of $f$.  Let
$$w^*(\beta) = \argmax_{w\in \Rd} v(w,\beta)$$
 be the set of maximizers of $v$ given $\beta$. We note that $w^*(\beta)$ is a convex set since $v(w,\beta)$ is concave in $w$ for any $\beta$. Note that if $f$ is differentiable, $w^*(\beta)$ is a singleton set.

\definition{The \textit{convex conjugate} of a function $f:\mathbb{R}^d\rightarrow \mathbb{R}$ , denoted $f^*:\mathbb{R}^d\rightarrow \mathbb{R}$ is the function defined as
	$$f^*(x^*) = \sup_{x\in \Rd} (\innprod{x^*}{x} - f(x)). $$
$f^*$ is always convex (even when $f$ is not) but is not necessarily finite. However, our positivity and homogeneity assumptions imply that it must be finite everywhere, as we will prove in step 2.

\sw{FIXED:still need to insert condition such that $y^*(\beta)$ is well-defined}
\begin{claim}
	The set $w^*(\beta)$ is equal to $\partial f^*(\beta)$, the set of subgradients of $f^*$ at $\beta$.
\end{claim}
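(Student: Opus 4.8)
The plan is to recognize this as the standard duality between the maximizers appearing in the definition of a conjugate and the subdifferential of that conjugate, and then to verify that the hypotheses on $f$ place us in the regime where this duality is exact. First I would discard the term $\innprod{x}{\beta}$ in $v(w,\beta)$, since it is constant in $w$; thus $w^*(\beta) = \argmax_{w}\left(\innprod{w}{\beta} - f(w)\right)$, and by the definition of the convex conjugate this maximum value, when attained, is exactly $f^*(\beta)$. Hence $w \in w^*(\beta)$ if and only if $\innprod{w}{\beta} - f(w) = f^*(\beta)$, i.e. equality holds in the Fenchel--Young inequality $f(w) + f^*(\beta) \geq \innprod{w}{\beta}$.

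Next I would characterize membership in $w^*(\beta)$ through first-order optimality. Because $f$ is convex, $w$ maximizes the concave function $\innprod{w}{\beta} - f(w)$ precisely when $0 \in \partial_w\left(f(w) - \innprod{w}{\beta}\right)$, that is, when $\beta \in \partial f(w)$. This gives the intermediate identity $w^*(\beta) = \{\, w : \beta \in \partial f(w)\,\}$, and it then remains only to convert the condition $\beta \in \partial f(w)$ into the condition $w \in \partial f^*(\beta)$.

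For this conversion I would invoke the standard subgradient inversion theorem for conjugate functions (e.g. \cite{rockafellar2015convex}, Theorem 23.5): for a closed proper convex $f$ one has $\beta \in \partial f(w) \iff w \in \partial f^*(\beta) \iff f(w) + f^*(\beta) = \innprod{w}{\beta}$. The single hypothesis that must be checked to apply this is that $f$ is closed (lower semicontinuous) and proper, which is where I expect the only real subtlety to lie, since subgradient inversion can fail for non-closed $f$. But here $f$ is assumed convex and finite on all of $\Rd$, and a finite convex function on $\Rd$ is automatically continuous, hence closed and proper, so $f^{**}=f$ and the inversion is valid. Chaining the three characterizations then yields $w^*(\beta) = \partial f^*(\beta)$.

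I would also remark in passing that the identity is consistent even at points $\beta$ where the supremum defining $f^*(\beta)$ is not attained: there both $w^*(\beta)$ and $\partial f^*(\beta)$ are empty. Since step~2 (Claim~\ref{claim:finite}) will subsequently establish that $f^*$ is finite everywhere, this degenerate case does not ultimately arise, but noting it keeps the argument clean and independent of the finiteness step, consistent with the remark in the text that this first step relies only on the convexity of $f$.
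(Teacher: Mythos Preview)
Your proposal is correct and follows essentially the same route as the paper: discard the constant $\innprod{x}{\beta}$, use first-order optimality for the concave maximization to obtain $\beta \in \partial f(w)$, and then invoke the subgradient inversion theorem (Rockafellar, Theorem~23.5) to flip this to $w \in \partial f^*(\beta)$. Your additional remarks---the Fenchel--Young equality characterization, the explicit check that a finite convex function on $\Rd$ is closed and proper, and the observation about the possibly empty case---are all sound elaborations, but the core argument matches the paper's.
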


\begin{proof}
	Fix a $\beta$. Recall that $$v(w,\beta) = \innprod{x}{\beta} + \innprod{w}{\beta} -f(w).$$
	$\innprod{x}{\beta}$ is a constant since $\beta$ is fixed and hence can be ignored. We have that
	\begin{align*}
   	  w\in w^*(\beta) &\Leftrightarrow 0 \in \partial[\langle w, \beta \rangle - f(w)] \\
	                  &\Leftrightarrow \beta \in \partial f(w)  \\
                      &\Leftrightarrow w \in \partial f^*(\beta).
	\end{align*}
	The first equivalence follows from the optimality of $w$. The second follows from computation. To show the last equivalence, we apply the following theorem from convex analysis:
	\begin{theorem}[\citet{rockafellar2015convex}, Theorem 23.5; cf Corollary 10.1.1, Theorem 7.1]
	Let $f:\Rd \to \mathbb{R}$ be convex and finite everywhere. Then
	\[
	x^* \in \partial f(x) \Leftrightarrow x\in \partial f^*(x^*).
	\]
	\end{theorem}
\end{proof}

Therefore, we have the following
\begin{corollary}\label{cor:best}
The Strategic agent's best-response set is
$$\argmax_{\hat{x}\in \Rd} u(\hat{x},x,\beta)=x+w^*(\beta)=x+\partial f^*(\beta).$$
\end{corollary}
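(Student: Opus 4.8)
The plan is to read off the Corollary directly from the affine change of variables $w = \hat{x} - x$ combined with the Claim just established. Since the Corollary merely re-expresses the set equality $w^*(\beta) = \partial f^*(\beta)$ back in the original coordinates, there is essentially no new mathematical content to prove — the work is entirely in confirming that the reparametrization transports maximizers correctly.

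First I would note that the translation $\hat{x} \mapsto \hat{x} - x$ is a bijection of $\Rd$, and that by the very definition of $v$ we have the identity $u(\hat{x}, x, \beta) = v(\hat{x} - x, \beta)$ for every $\hat{x}$. A bijective reparametrization carries the set of maximizers of one objective onto the set of maximizers of the other, so the collection of $\hat{x}$ maximizing $u(\cdot, x, \beta)$ is exactly the image under $\hat{x} = x + w$ of the collection of $w$ maximizing $v(\cdot, \beta)$. In symbols,
$$\argmax_{\hat{x}\in\Rd} u(\hat{x}, x, \beta) = x + \argmax_{w \in \Rd} v(w, \beta) = x + w^*(\beta).$$
Invoking the Claim, which identifies $w^*(\beta) = \partial f^*(\beta)$, then yields $\argmax_{\hat{x}} u(\hat{x}, x, \beta) = x + \partial f^*(\beta)$, which is the assertion of the Corollary. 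Note that this is a set equality, so it holds verbatim regardless of whether the best-response set turns out to be empty, a singleton, or larger; the finiteness and singleton-related questions are deferred to the later steps of Theorem~\ref{thm:convex}.

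The only point deserving a moment's care — and the closest thing to an obstacle, though a trivial one — is the constant additive term $\langle x, \beta\rangle$ appearing in $v(w,\beta) = \langle x, \beta\rangle + \langle w, \beta\rangle - f(w)$. Because $\beta$ is held fixed throughout, this term is independent of the optimization variable $w$, and adding a constant to an objective leaves its set of maximizers unchanged. Hence $w^*(\beta) = \argmax_w\big(\langle w, \beta\rangle - f(w)\big)$ exactly as used in the Claim, the chain of equalities above goes through, and the best-response set is precisely $x + \partial f^*(\beta)$.
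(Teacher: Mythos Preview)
Your proposal is correct and matches the paper's approach exactly: the paper presents this corollary as an immediate consequence of the preceding Claim via the change of variables $w = \hat{x} - x$, without giving a separate proof. Your write-up simply makes explicit the (trivial) justification that the bijective translation carries maximizers to maximizers and that the constant term $\langle x,\beta\rangle$ does not affect the argmax.
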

\zs{FIXED:This is a little sloppy: we've been thinking of $\hat{x}(\beta)$ as a single (vector) value, but here we're treating it like a set...}


We now move on to step 2.  We want to show that $f^*$ is always finite and that the set of best responses of the strategic agent is non-empty and bounded.
\begin{claim}\label{claim:finite}
Under the assumptions in Theorem \ref{thm:convex}, $f^*(\beta)$ is finite for all $\beta\in\Rd$.
\end{claim}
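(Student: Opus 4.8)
The plan is to show that the objective $\langle \beta, x\rangle - f(x)$ appearing in the definition $f^*(\beta) = \sup_{x}(\langle \beta, x\rangle - f(x))$ is bounded above, by exploiting the fact that a function that is positive homogeneous of degree $r>1$ grows strictly superlinearly and therefore eventually dominates the linear term $\langle \beta, x\rangle$. As a first bookkeeping step I would record that $f(0)=0$: homogeneity gives $f(0)=f(2\cdot 0)=2^r f(0)$, and since $r>1$ this forces $f(0)=0$. This already yields $f^*(\beta)\geq \langle \beta,0\rangle - f(0) = 0$, so $f^*$ is bounded below; the entire content of the claim is the upper bound.

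The crux is a uniform superlinear lower bound on $f$. Let $S=\{u\in\Rd : \|u\|_2 = 1\}$ and set $m = \min_{u\in S} f(u)$. Because $f$ is convex and finite on all of $\Rd$, it is continuous there, and $S$ is compact, so this minimum is attained at some $u_0\in S$; since $u_0\neq 0$, the positivity assumption gives $m = f(u_0) > 0$. Then for any $x\neq 0$, writing $x = \|x\|_2\cdot(x/\|x\|_2)$ and applying positive homogeneity of degree $r$ gives $f(x) = \|x\|_2^r\, f(x/\|x\|_2) \geq m\|x\|_2^r$. Establishing $m>0$ — i.e. that $f$ is bounded away from zero on the sphere — is the main obstacle, and it is exactly where all three hypotheses are used: convexity to obtain continuity, compactness of the sphere to attain the minimum, and homogeneity to propagate the sphere bound to all of $\Rd$.

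Finally I would combine this lower bound with Cauchy--Schwarz: for every $x$, $\langle \beta, x\rangle - f(x) \leq \|\beta\|_2\|x\|_2 - m\|x\|_2^r$. Setting $t=\|x\|_2\geq 0$, the right-hand side is the scalar function $g(t) = \|\beta\|_2\, t - m t^r$, which is continuous on $[0,\infty)$ and tends to $-\infty$ as $t\to\infty$ because $r>1$ and $m>0$; hence $g$ attains a finite maximum over $t\geq 0$. Taking the supremum over $x$ then gives $f^*(\beta) \leq \max_{t\geq 0} g(t) < \infty$, completing the proof. If an explicit bound is desired, maximizing $g$ yields the maximizer $t^* = (\|\beta\|_2/(mr))^{1/(r-1)}$, but only finiteness is needed here.
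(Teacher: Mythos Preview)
Your proof is correct and follows essentially the same approach as the paper: both use continuity of $f$ (from convexity and finiteness on $\Rd$), positivity of $f$ on the compact unit sphere, and homogeneity to reduce the supremum to a one-dimensional radial optimization that is finite because $r>1$. The only cosmetic difference is the order of operations---the paper fixes a direction $v\in S^{d-1}$, optimizes over the radius $\rho$ to get an explicit expression $\tfrac{r^{1-s}}{s}\langle\beta,v\rangle^s f(v)^{1-s}$, and then takes the supremum over $v$, whereas you first extract the uniform lower bound $m=\min_{S^{d-1}} f>0$ and then optimize radially; your version is arguably tidier since it sidesteps the case distinction on the sign of $\langle\beta,v\rangle$.
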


See \Cref{pf:{claim:finite}} for the proof.

We now turn our attention to the set of best responses. The following theorem guarantees that the strategic agents' best response correspondence is well behaved:

\begin{theorem}[\citet{rockafellar2015convex}, Theorem 23.4] \label{thm:subdiff}
  Suppose $g:\Rd \to \mathbb{R}$ is convex and finite everywhere. Then $\partial g(x)$ is a non-empty compact subset of $\Rd$ for all $x\in \Rd$.
\end{theorem}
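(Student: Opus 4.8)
The plan is to prove the three properties that together give ``nonempty compact'' separately --- closedness, boundedness, and nonemptiness --- and then to invoke the Heine--Borel theorem, which is available because we work in the finite-dimensional space $\Rd$. This is the standard subdifferential regularity result of convex analysis (cited here as \citet{rockafellar2015convex}, Theorem 23.4), so I would recall the classical argument rather than build it from scratch. The single hypothesis I would lean on throughout is that $g$ is finite \emph{everywhere}, so that every $x$ lies in the interior of the effective domain of $g$.

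First I would dispatch closedness (and convexity), which are essentially immediate from the definition. By definition $v \in \partial g(x)$ iff $g(y) \ge g(x) + \innprod{v}{y-x}$ for all $y \in \Rd$, and for each fixed $y$ this is a single closed affine half-space constraint on $v$. Hence $\partial g(x) = \bigcap_{y \in \Rd} \{ v : \innprod{v}{y-x} \le g(y) - g(x) \}$ is an intersection of closed convex sets, so it is closed and convex.

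Next I would establish boundedness through local Lipschitzness. A finite convex function on $\Rd$ is locally Lipschitz (it is bounded above on any bounded set by comparison with the vertices of an enclosing simplex, and convexity then forces a matching local lower bound); let $L$ be a Lipschitz constant for $g$ on a closed ball $B(x,\epsilon)$. For any nonzero $v \in \partial g(x)$, substituting $y = x + \epsilon\, v/\|v\|$ into the subgradient inequality gives $\epsilon\|v\| = \innprod{v}{y-x} \le g(y) - g(x) \le L\epsilon$, whence $\|v\| \le L$. So $\partial g(x)$ is bounded, and together with closedness this makes it compact by Heine--Borel.

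The main obstacle is nonemptiness, which I would handle with the supporting hyperplane theorem applied to the epigraph $\mathrm{epi}(g) = \{ (y,t) \in \Rd \times \mathbb{R} : t \ge g(y) \}$, a convex set with nonempty interior. Its boundary point $(x, g(x))$ admits a supporting hyperplane, and the crux of the whole argument is to show this hyperplane is \emph{non-vertical} --- precisely the step where finiteness everywhere is used, since a vertical supporting hyperplane at $(x,g(x))$ would separate $x$ from $\mathrm{dom}(g) = \Rd$, which is impossible for an interior point. A non-vertical supporting hyperplane is exactly the graph of an affine minorant of $g$ that touches $g$ at $x$, and its slope is by definition a subgradient, so $\partial g(x) \neq \emptyset$. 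Assembling the three parts gives that $\partial g(x)$ is a nonempty, compact (and convex) subset of $\Rd$, as claimed.
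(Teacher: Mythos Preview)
The paper does not prove this theorem; it simply quotes it from Rockafellar as a known result and uses it as a black box. Your outline is the standard argument and is correct: closedness from the half-space description of $\partial g(x)$, boundedness via local Lipschitzness of finite convex functions on $\Rd$, and nonemptiness via a non-vertical supporting hyperplane to the epigraph at $(x,g(x))$. There is therefore nothing in the paper's own text to compare against --- you have supplied a sound proof sketch where the paper supplies only a citation.
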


Since we showed that $f^*$ is convex and finite everywhere, the above theorem, together with Corollary \ref{cor:best}, implies:
\begin{corollary} \label{cor:best-bounded}
Under the assumption of Theorem \ref{thm:convex}, the set $\argmax\limits_{\hat{x}\in \Rd} u(\hat{x},x,\beta)$ is non-empty and bounded for all $\beta\in \Rd$.
\end{corollary}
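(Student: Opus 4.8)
The plan is to read off the corollary directly from the structural description of the best-response set in Corollary~\ref{cor:best} together with the regularity of the subdifferential of a finite convex function supplied by Theorem~\ref{thm:subdiff}. The one genuine input needed is that $f^*$ be \emph{finite everywhere}, which is exactly the content of Claim~\ref{claim:finite}; everything else is a formal consequence, so I expect this corollary to be a short application rather than a new argument.

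First I would record that $f^*$ satisfies both hypotheses of Theorem~\ref{thm:subdiff}. Convexity is automatic: the convex conjugate of any function is convex (as noted in the definition of $f^*$), independent of whether $f$ itself is convex. Finiteness on all of $\Rd$ is precisely Claim~\ref{claim:finite}. Hence, taking $g = f^*$ in Theorem~\ref{thm:subdiff} (Rockafellar, Theorem 23.4), I conclude that $\partial f^*(\beta)$ is a \emph{non-empty compact} subset of $\Rd$ for every $\beta \in \Rd$. In particular $\partial f^*(\beta)$ is non-empty and bounded.

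Next I would transport this from $\partial f^*(\beta)$ to the actual best-response set. By Corollary~\ref{cor:best}, for fixed $x$,
\[
\argmax_{\hat{x}\in \Rd} u(\hat{x},x,\beta) = x + \partial f^*(\beta),
\]
i.e. the best-response set is the translate of $\partial f^*(\beta)$ by the fixed vector $x$. Translation by a fixed vector is a homeomorphism of $\Rd$ that preserves non-emptiness and maps bounded sets to bounded sets, so the best-response set is non-empty and bounded for every $\beta \in \Rd$, as claimed.

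The step I expect to be the only real obstacle has already been isolated and discharged: it is the finiteness of $f^*$ in Claim~\ref{claim:finite}, where the superlinear homogeneity assumption $r>1$ (together with positivity) is what forces $\innprod{\beta}{x} - f(x)$ to have a finite supremum rather than diverging. Once finiteness is granted, the present corollary carries no additional difficulty, and I would not invoke the homogeneity of $f$ again here; the regularity of $\partial f^*$ alone suffices. (I would note in passing that one could even read off a quantitative bound on the best-response set from the local Lipschitz constant of $f^*$ near $\beta$, but for the qualitative statement of the corollary the compactness conclusion of Theorem~\ref{thm:subdiff} is all that is required.)
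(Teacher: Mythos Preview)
Your proposal is correct and follows exactly the paper's own argument: apply Claim~\ref{claim:finite} to get $f^*$ finite, invoke Theorem~\ref{thm:subdiff} to conclude $\partial f^*(\beta)$ is non-empty and compact, and then translate via Corollary~\ref{cor:best}. There is nothing to add.
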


The boundedness of the set $\argmax\limits_{\hat{x}\in \Rd} u(\hat{x},x,\beta)$ is not directly used in the proof of \Cref{thm:convex}, but it excludes the unrealistic situation in which the strategic agent's best response can be arbitrarily far from the origin.


Step 3 consists of the following claim:
\begin{claim}\label{claim:homo}
	
	If $f:\Rd\to\mathbb{R}$ is convex and homogeneous of degree $r>1$, then $f^*:\Rd\to \mathbb{R}$ is convex and homogeneous of degree $s>1$, where $\frac{1}{r}+\frac{1}{s}=1$.

\end{claim}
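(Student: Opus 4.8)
The plan is to treat convexity and homogeneity separately, since convexity comes for free. By definition $f^*(\beta) = \sup_{x}(\innprod{\beta}{x} - f(x))$ is a pointwise supremum of affine functions of $\beta$, hence convex regardless of the properties of $f$; combined with Claim~\ref{claim:finite}, which already guarantees that $f^*$ is finite everywhere under our assumptions, this immediately settles convexity of $f^*$. The entire content of the claim is therefore the assertion about the degree of homogeneity, and the hypothesis $r>1$ is exactly what makes this work.

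For the homogeneity statement I would fix a scalar $\alpha>0$ and compute $f^*(\alpha\beta)$ directly from the definition, exploiting the degree-$r$ homogeneity of $f$ through a carefully chosen change of variables. Starting from $f^*(\alpha\beta) = \sup_{x}\left(\alpha\innprod{\beta}{x} - f(x)\right)$, I substitute $x = \alpha^{1/(r-1)} z$. Since $\alpha^{1/(r-1)}$ is a strictly positive scalar (this is where $r>1$ is essential, so that $1/(r-1)$ is finite and the map $z\mapsto \alpha^{1/(r-1)}z$ is a bijection of $\Rd$), the supremum over $x$ equals the supremum over $z$. The linear term becomes $\alpha\cdot\alpha^{1/(r-1)}\innprod{\beta}{z} = \alpha^{r/(r-1)}\innprod{\beta}{z}$, and by homogeneity of $f$ we get $f(\alpha^{1/(r-1)}z) = \alpha^{r/(r-1)}f(z)$. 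Writing $s = r/(r-1)$, both terms now carry the common factor $\alpha^{s}$, which pulls out of the supremum to yield $f^*(\alpha\beta) = \alpha^{s} f^*(\beta)$, so $f^*$ is positive homogeneous of degree $s$. It then remains only to check the arithmetic: $s = r/(r-1)$ satisfies $\frac1r + \frac1s = \frac1r + \frac{r-1}{r} = 1$, and since $r>1$ we have $r-1\in(0,r)$, so $s = r/(r-1) > 1$, as required.

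I do not anticipate any serious obstacle here: the argument reduces to a single change of variables, and the only point demanding care is that the scaling exponent $1/(r-1)$ is precisely what aligns the exponents of the linear and the $f$ terms, while $r>1$ is what makes this exponent finite and positive (ensuring the substitution is a genuine bijection and that the resulting degree $s$ is strictly greater than one). Finiteness of $f^*$ from Claim~\ref{claim:finite} is what justifies manipulating the suprema without worrying about $\pm\infty$, so no separate boundedness bookkeeping is needed.
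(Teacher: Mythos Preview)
Your proof is correct. The paper takes a slightly different route: it first records two general transformation rules for convex conjugates---how $f^*$ behaves under dilation of the argument and under scalar multiplication of $f$---applies both rules to the identity $f(\alpha x)=\alpha^r f(x)$ to obtain the functional equation $f^*(\alpha^{-1}\beta)=\alpha^r f^*(\alpha^{-r}\beta)$, and then untangles this with a further change of variables $v=\alpha^{-1}\beta$, $\gamma=\alpha^{1-r}$. Your argument instead goes straight to the definition of $f^*(\alpha\beta)$ and makes a single substitution $x=\alpha^{1/(r-1)}z$ that aligns the exponents of both terms in one stroke. The two are equivalent in content---the paper's conjugate rules are precisely what your substitution is computing under the hood---but your version avoids the intermediate functional equation and the second change of variables, and makes the role of the condition $r>1$ (needed so that $1/(r-1)$ is finite and positive) more visible. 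One minor remark: the homogeneity calculation itself does not actually require finiteness of $f^*$, since the identity $f^*(\alpha\beta)=\alpha^s f^*(\beta)$ holds even when both sides are $+\infty$; invoking Claim~\ref{claim:finite} is strictly only needed to justify the codomain $\mathbb{R}$ in the conclusion, not to manipulate the supremum.
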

Throughout the rest of the paper, $(r,s)$ (and also $(p,q)$ which we will use when discussing norms) will refer to such a pair of numbers unless stated otherwise. For the proof of \Cref{claim:homo}, see \Cref{pf:{claim:homo}}.



Now we move on to step 4. First we show a slightly more general version of Euler's theorem on homogeneous functions.

\begin{theorem}[Euler]
\label{thm:euler}
	Assume $g$ is convex and homogeneous of degree $k>0$, then for any $v\in \partial g(x)$,
	\[
	\langle v, x \rangle = k g(x) .
	\]
\end{theorem}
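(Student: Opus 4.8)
The plan is to exploit the subgradient inequality, but to evaluate it only along the ray through $x$ (i.e. at test points $\alpha x$), and then to combine this with positive homogeneity and take one-sided limits as $\alpha \to 1$. This keeps everything within the regime $\alpha > 0$ where homogeneity is available, and it avoids assuming $g$ is differentiable, so it genuinely generalizes the classical Euler identity to arbitrary subgradients.

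Concretely, I would first fix $v \in \partial g(x)$ and instantiate the defining inequality $g(z) \ge g(x) + \langle v, z - x\rangle$ at $z = \alpha x$ for an arbitrary scalar $\alpha > 0$. This gives
$$g(\alpha x) \ge g(x) + \langle v, \alpha x - x\rangle = g(x) + (\alpha - 1)\langle v, x\rangle.$$
Substituting the homogeneity relation $g(\alpha x) = \alpha^k g(x)$ and rearranging yields the single key inequality
$$(\alpha^k - 1)\, g(x) \ge (\alpha - 1)\, \langle v, x\rangle \qquad \text{for all } \alpha > 0.$$
Note that both sides vanish at $\alpha = 1$, so this says that $\alpha = 1$ minimizes the scalar function $\alpha \mapsto (\alpha^k - 1) g(x) - (\alpha - 1)\langle v, x\rangle$ over $\alpha > 0$.

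The final step is to extract equality by approaching $\alpha = 1$ from both sides. For $\alpha > 1$ I divide by $\alpha - 1 > 0$ to get $\tfrac{\alpha^k - 1}{\alpha - 1}\, g(x) \ge \langle v, x\rangle$, and letting $\alpha \to 1^+$ (using that $\tfrac{\alpha^k - 1}{\alpha - 1} \to k$, the derivative of $t \mapsto t^k$ at $t = 1$) gives $k\, g(x) \ge \langle v, x\rangle$. For $0 < \alpha < 1$ I divide by $\alpha - 1 < 0$, which reverses the inequality to $\tfrac{\alpha^k - 1}{\alpha - 1}\, g(x) \le \langle v, x\rangle$, and letting $\alpha \to 1^-$ gives $k\, g(x) \le \langle v, x\rangle$. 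Combining the two bounds yields $\langle v, x\rangle = k\, g(x)$, as desired.

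I do not expect a serious obstacle here; the only points requiring a little care are staying within $\alpha > 0$ (so positive homogeneity, rather than full homogeneity, is enough) and correctly tracking the sign flip when dividing by $\alpha - 1$ on the two sides of $1$. The limit $\tfrac{\alpha^k - 1}{\alpha - 1} \to k$ is elementary and needs no smoothness of $g$ itself, which is exactly what makes the argument work for a general $v \in \partial g(x)$ rather than only for a gradient. The hypothesis $k > 0$ is used implicitly only insofar as it is the degree appearing in the identity; the argument would in fact go through for any real $k$, but $k>0$ is all we need in context.
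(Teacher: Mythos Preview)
Your proof is correct. Both your argument and the paper's reduce to the one-variable restriction $\bar g(\alpha) = g(\alpha x)$ on the ray through $x$, observe via homogeneity that $\bar g(\alpha) = \alpha^k g(x)$ is differentiable at $\alpha=1$ with derivative $kg(x)$, and then tie this back to $\langle v,x\rangle$ for $v\in\partial g(x)$. The difference is in that last link: the paper invokes a subgradient chain rule (Rockafellar, Theorem~23.9) to identify $\partial\bar g(1)=\{\langle x,v\rangle : v\in\partial g(x)\}$ and then reads off the singleton; you instead plug $z=\alpha x$ directly into the subgradient inequality and take one-sided limits $\alpha\to 1^{\pm}$ of $(\alpha^k-1)/(\alpha-1)$. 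Your route is more self-contained---it needs nothing beyond the definition of a subgradient and an elementary limit---whereas the paper's is shorter but relies on an external structural result. Substantively they are the same idea.
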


Note that: 1. This theorem is slightly more general than the standard statement of Euler's theorem in that it can deal with non-differentiable functions; 2. The value of the inner product is independent of the choice of the subgradient, which is interesting. 

\begin{proof}
    Fix any $x$ and consider the function $\bar{g}: [0,\infty) \to \mathbb{R}$ defined by $\bar{g}(\alpha) = g(\alpha x)$.
    Convexity of $g$ implies that $\bar{g}$ is a convex function of $\alpha$:
    \begin{align*}
        \bar{g}(\delta \alpha + (1-\delta)\alpha')
        &= g(\delta (\alpha x) + (1-\delta) (\alpha' x))  \\
        &\leq \delta g(\alpha x) + (1-\delta)g(\alpha' x)  \\
        &= \delta \bar{g}(\alpha) + (1-\delta) \bar{g}(\alpha') .
    \end{align*}
    We consider the set of subgradients of $\bar{g}(\alpha)$ (which is well-defined as $\bar{g}$ is convex).
    By a chain rule for the subgradient (see Theorem 23.9 of \cite{rockafellar2015convex}), they are
    \begin{equation} \label{eqn:subdiff-1}
      \partial \bar{g}(\alpha) = \left\{ \langle x, v \rangle \mid v \in \partial g(\alpha x) \right\}.
    \end{equation}
    On the other hand, by homogeneity, we have $\bar{g}(\alpha) = \alpha^k g(x)$, and this is differentiable with respect to $\alpha$ with $\frac{d\bar{g}}{d\alpha}(\alpha) = k\alpha^{k-1}g(x)$, and in particular $\frac{d\bar{g}}{d\alpha}(1) = kg(x)$.
    So we must have that the set of subgradients of $\bar{g}$ at $\alpha=1$ equals this singleton set, i.e. $\partial \bar{g}(1) = \{kg(x)\}$.
    Combining with (\ref{eqn:subdiff-1}), we get that for all $v \in \partial g(x)$, $\langle x,v \rangle = kg(x)$.

%
\end{proof}

\djs{I think the proof of this can stay in the text, since readers may wonder why breaking ties arbitrarily doesn't matter.}

Combining Claim \ref{claim:homo} and Euler's theorem, if $f$ is convex and homogeneous of degree $r>1$, we have that
$$\xbb = \innprod{x}{\beta}+s f^*(\beta)$$
for any $\hat{x}(\beta)\in \argmax\limits_{\hat{x}\in \Rd} u(\hat{x},x,\beta)$. This shows that $\xbb$ is a well-defined convex function of $\beta$.

\zs{FIXED:Probably want something here to wrap up this proof}

\subsection{Examples}\label{sec:example}

In this section we provide a large class of $f$ that satisfy the assumptions in Theorem \ref{thm:convex}. We will also see that for this class of functions $f$, we can derive simple closed form expressions for $f^*$, which will help us determine the convergence rate for the optimization procedure we develop in Section \ref{sec:opt}. We will also see some examples that violate our assumptions, which demonstrate the necessity of those assumptions: without them, agent best responses are not necessarily well defined.

\begin{claim}\label{claim:f}
	
	For any abstract norm $\|\cdot\|$ on $\Rd$ and any $r>1$
	$$f(x) = \frac{1}{r} \|x\|^r$$ satisfies the conditions of Theorem \ref{thm:convex}. Namely,
	\begin{itemize}
		\item $f(x)>0$ for all $x\neq0$;
		\item $f$ is convex over $\Rd$;
		\item $f$ is positive homogeneous of degree $r>1$.
	\end{itemize}
	Furthermore, let $\|\cdot\|_*$ be the dual norm of $\|\cdot\|$. $f^*$ has the following form:
	\[
	f^*(\beta) = \frac{1}{s} \|\beta\|_*^s.
	\]
	Here $1/r+1/s=1.$ \ar{How come we aren't using the $p$, $q$ notation we already defined for such pairs?}\djs{FIXED}
\end{claim}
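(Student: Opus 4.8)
The plan is to dispatch the three structural conditions quickly and then devote the real effort to computing the conjugate $f^*$ in closed form. For the conditions: positivity is immediate since $\|x\| > 0$ for $x \neq 0$ implies $f(x) = \tfrac{1}{r}\|x\|^r > 0$; positive homogeneity of degree $r$ follows from $\|\alpha x\| = \alpha\|x\|$ for $\alpha > 0$, giving $f(\alpha x) = \tfrac{1}{r}\alpha^r\|x\|^r = \alpha^r f(x)$; and convexity follows because $\|\cdot\|$ is convex (triangle inequality together with homogeneity) while $t \mapsto t^r$ is convex and nondecreasing on $[0,\infty)$ for $r > 1$, so the composition $\|x\|^r$ is convex and scaling by $\tfrac{1}{r} > 0$ preserves this.

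For the conjugate, the main idea is a polar decomposition of the supremum. Writing any nonzero $x$ as $x = t u$ with $t = \|x\| > 0$ and $\|u\| = 1$, I would split the supremum defining $f^*(\beta) = \sup_{x\in\Rd}(\innprod{\beta}{x} - \tfrac1r\|x\|^r)$ into a radial part and an angular part:
\[
f^*(\beta) = \sup_{t \ge 0}\ \sup_{\|u\|=1}\Big(t\,\innprod{\beta}{u} - \tfrac{1}{r}t^r\Big).
\]
The inner supremum over unit vectors is exactly the definition of the dual norm, $\sup_{\|u\|=1}\innprod{\beta}{u} = \|\beta\|_*$, and it does not interact with the subtracted term. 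This reduces the problem to the single-variable maximization $f^*(\beta) = \sup_{t \ge 0}(t\|\beta\|_* - \tfrac{1}{r}t^r)$.

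It remains to solve this one-dimensional problem. Setting $a = \|\beta\|_*$, the objective $g(t) = at - \tfrac1r t^r$ is concave on $[0,\infty)$ (since $g''(t) = -(r-1)t^{r-2} < 0$ for $t>0$), and the finiteness guaranteed abstractly in Claim \ref{claim:finite} reappears here concretely: $r > 1$ makes the $-\tfrac1r t^r$ term dominate, so the supremum is attained. The first-order condition $g'(t) = a - t^{r-1} = 0$ gives $t^* = a^{1/(r-1)}$, and substituting back yields $g(t^*) = (1 - \tfrac1r)\,a^{r/(r-1)}$. Finally I would translate the exponents using the conjugacy relation $\tfrac1r + \tfrac1s = 1$, which gives $1 - \tfrac1r = \tfrac1s$ and $\tfrac{r}{r-1} = s$, so that $f^*(\beta) = \tfrac1s\|\beta\|_*^s$, with the $\beta = 0$ case holding trivially. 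I expect the only genuine subtlety to be the clean separation of the dual-norm supremum from the radial term; once the polar decomposition is justified the rest is routine single-variable calculus and exponent bookkeeping. As an alternative route, one could shortcut part of this by invoking the degree-$s$ homogeneity of $f^*$ established in Claim \ref{claim:homo} and evaluating only on the dual unit sphere.
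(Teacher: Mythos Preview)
Your proposal is correct and essentially matches the paper's proof. Both arguments reduce the computation of $f^*(\beta)$ to the scalar maximization $\sup_{t\ge 0}\big(t\,\|\beta\|_* - \tfrac{1}{r}t^r\big)$ via the dual norm; you reach this by an explicit polar decomposition $x = tu$, while the paper reaches it by invoking the dual-norm inequality $\innprod{x}{\beta}\le \|x\|\cdot\|\beta\|_*$ for the upper bound and then exhibiting a witness $\tilde{x}$ with $\innprod{\tilde{x}}{\beta}=\|\tilde{x}\|\cdot\|\beta\|_*$ and $\|\tilde{x}\|^r=\|\beta\|_*^s$ for the matching lower bound. The only extra care your route needs is the case $x=0$ (where no unit $u$ exists), but that is already covered by allowing $t=0$ in your radial supremum.
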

The conditions can be easily checked. See \Cref{pf:{claim:f}} for the derivation of $f^*$.


\begin{corollary} \label{cor:norms-convex}
	For any norm $\|\cdot\|$ on $\Rd$, any $r>1$ and any invertible linear transform $A:\Rd\to\Rd$,
	$$f(x) = \frac{1}{r} \|Ax\|^r$$ satisfies all the conditions in Theorem \ref{thm:convex}. $f^*$ has the following form:
	\[
		f^*(\beta) = \frac{1}{s} \|(A^T)^{-1}\beta\|_*^s.
	\]
\end{corollary}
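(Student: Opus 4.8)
The plan is to reduce the corollary to the special case already handled in Claim \ref{claim:f} by observing that $x \mapsto \|Ax\|$ is itself a norm, and then to read off its dual norm via a change of variables.

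First I would verify that $N(x) := \|Ax\|$ defines a norm on $\Rd$. Homogeneity and the triangle inequality are inherited immediately from $\|\cdot\|$ together with the linearity of $A$: we have $N(\alpha x) = \|A(\alpha x)\| = |\alpha|\,\|Ax\| = |\alpha| N(x)$ and $N(x+y) = \|Ax + Ay\| \le \|Ax\| + \|Ay\| = N(x) + N(y)$. The only place invertibility of $A$ enters is positive-definiteness: $N(x) = 0 \iff Ax = 0 \iff x = 0$, where the last equivalence is exactly what would fail if $A$ were singular. With $N$ established as a norm, Claim \ref{claim:f} applies verbatim to $f(x) = \frac{1}{r} N(x)^r$, supplying all three required properties (positivity, convexity, positive homogeneity of degree $r>1$) and the conjugate formula $f^*(\beta) = \frac{1}{s} N_*(\beta)^s$, where $N_*$ denotes the dual norm of $N$.

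It then remains only to identify $N_*$ in terms of the dual norm $\|\cdot\|_*$ of $\|\cdot\|$. By definition $N_*(\beta) = \sup_{N(x) \le 1} \innprod{\beta}{x}$. Substituting $u = Ax$ (equivalently $x = A^{-1}u$, a bijection of $\Rd$ since $A$ is invertible) converts the constraint $N(x)\le 1$ into $\|u\|\le 1$, and the adjoint identity $\innprod{\beta}{A^{-1}u} = \innprod{(A^{-1})^T\beta}{u}$ together with $(A^{-1})^T = (A^T)^{-1}$ gives
\[
N_*(\beta) = \sup_{\|u\| \le 1} \innprod{(A^T)^{-1}\beta}{u} = \|(A^T)^{-1}\beta\|_*,
\]
the final equality being just the definition of $\|\cdot\|_*$. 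Plugging this into $f^*(\beta) = \frac{1}{s} N_*(\beta)^s$ yields the claimed expression $f^*(\beta) = \frac{1}{s}\|(A^T)^{-1}\beta\|_*^s$.

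I expect no genuine obstacle here; the argument is essentially bookkeeping on top of Claim \ref{claim:f}. The only points deserving care are that invertibility of $A$ is used in exactly two places (positive-definiteness of $N$ and the legitimacy of the change of variables $u = Ax$), and that one must keep the transposes straight through the adjoint identity $(A^{-1})^T = (A^T)^{-1}$, since this is what produces $(A^T)^{-1}$ rather than $A^{-1}$ inside the dual norm.
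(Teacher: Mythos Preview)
Your proposal is correct and follows essentially the same route as the paper: observe that $x\mapsto\|Ax\|$ is itself a norm (so Claim~\ref{claim:f} applies directly), and then identify the conjugate via the change of variables $u=Ax$, which is exactly the content of the paper's identity $\innprod{x}{\beta}-g(Ax)=\innprod{Ax}{(A^T)^{-1}\beta}-g(Ax)$. Your version is more explicit about where invertibility is used and about the adjoint bookkeeping, but the underlying argument is the same.
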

\begin{proof}
Since $\|A(\cdot)\|$ is also a norm, the conditions follow from Claim \ref{claim:f}. The expression for $f^*$ is a result from the following identity:
$$\innprod{x}{\beta} - g(Ax)=\innprod{Ax}{(A^T)^{-1}\beta}- g(Ax).$$
\end{proof}
In particular, if $\|\cdot\|$ is the Euclidean 2-norm and $r=s=2$, $f(x) = \frac{1}{2} \|Ax\|^2$ can be viewed as a form of \emph{Mahalanobis distance}, commonly studied in the metric learning literature~\citep{metriclearning}.

Combining Corollary \ref{cor:norms-convex}, Theorem \ref{thm:convex}, and Theorem \ref{thm:sufficient-convex}, we get the following result.
This general approach can naturally extend to other choices of distance functions as well via the same outline.
\begin{corollary} \label{cor:norms-convex-losses}
  For any norm $\|\cdot\|$ on $\mathbb{R}^d$, any invertible linear transformation $A: \mathbb{R}^d \to \mathbb{R}^d$, and any $r > 1$, if an agent's utility is of the form
    \[ u(\hat{x},x,\beta) = \langle \hat{x},\beta \rangle - \frac{1}{r}\|Ax - A\hat{x}\|^r , \]
  then both the logistic loss and and hinge loss are convex functions of $\beta$, assuming the agent best-responds to $\beta$.
\end{corollary}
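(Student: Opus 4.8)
The plan is to obtain the result by directly composing three results already established in the excerpt: Corollary~\ref{cor:norms-convex}, Theorem~\ref{thm:convex}, and Theorem~\ref{thm:sufficient-convex}. No new machinery is required; the entire content is verifying that the stated utility function fits the hypotheses of this chain, so I expect the proof to be short.

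First I would rewrite the cost term so that it matches the displacement form $d(\hat{x},x) = f(\hat{x}-x)$ demanded by Theorem~\ref{thm:convex}. Since $Ax - A\hat{x} = A(x-\hat{x}) = -A(\hat{x}-x)$ and every norm satisfies $\|-v\| = \|v\|$, we have $\|Ax - A\hat{x}\|^r = \|A(\hat{x}-x)\|^r$. Hence the cost equals $f(\hat{x}-x)$ with $f(w) = \tfrac{1}{r}\|Aw\|^r$, which is exactly the function studied in Corollary~\ref{cor:norms-convex}.

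Next I would invoke Corollary~\ref{cor:norms-convex} to certify that this $f$ satisfies all three hypotheses of Theorem~\ref{thm:convex} --- positivity away from the origin, convexity over $\Rd$, and positive homogeneity of degree $r>1$ --- which hold because $A$ is an invertible linear transformation and $r>1$. Theorem~\ref{thm:convex} then guarantees that the map $\beta \mapsto \xbb = \langle \hat{x}(\beta), \beta\rangle$ is well-defined (finite and independent of the choice of maximizer, even when the best response is not unique) and convex in $\beta$; in fact the closed form $f^*(\beta) = \tfrac{1}{s}\|(A^T)^{-1}\beta\|_*^s$ from Corollary~\ref{cor:norms-convex} combined with Euler's theorem gives the explicit expression $\xbb = \langle x,\beta\rangle + s f^*(\beta)$, though only its convexity is needed here.

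Finally, with convexity of $\beta \mapsto \xbb$ in hand, Theorem~\ref{thm:sufficient-convex} applies verbatim, for every $(x,y)$, yielding that both logistic loss $\log(1+e^{-y\xbb})$ and hinge loss $(1-y\xbb)_+$ are convex functions of $\beta$. There is essentially no obstacle in this argument; the only point requiring a moment's care is the norm-symmetry observation that lets the cost be written in the form $f(\hat{x}-x)$, after which the three-result composition is immediate.
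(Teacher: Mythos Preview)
Your proposal is correct and follows exactly the approach the paper indicates: the paper states the corollary follows by ``Combining Corollary~\ref{cor:norms-convex}, Theorem~\ref{thm:convex}, and Theorem~\ref{thm:sufficient-convex},'' which is precisely the three-step chain you spell out. Your additional remark about norm symmetry to put the cost in the form $f(\hat{x}-x)$ is a helpful clarification but does not change the route.
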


\paragraph{Norms and nonexamples.}
\djs{Added discussion on degeneracy}
We give two non-examples, which illustrate the following points:
\begin{enumerate}
	\item There exist $f$ which violate the condition that $f$ is nonvanishing except at the origin, for which strategic agents might not have finite best responses, and;
	\item When we take the degree $r=1$ (violating the $r > 1$ condition), again, strategic agents might not have finite best responses.
\end{enumerate}
The first example shows that the linear transform $A$ we used in \Cref{cor:norms-convex-losses} must be full rank. The second example shows that $f(x) = \|x\|$  where $\|\cdot\|$ is a norm (not raised to any power $r > 1$) also fails to constrain the agent's best responses to be finite

To illustrate the first point, consider a utility function of the form used in \Cref{cor:norms-convex-losses} but with matrix $A$ having non-trivial kernel. Assume $v \in \ker A,v\neq0$. Note that $f(v)=0$ so it violates our non-degeneracy condition. When $\hat{x} = x + \lambda v,\forall \lambda\in\mathbb{R}$, the deviation incurs no cost since $v$ is in the kernel of $A$. Whenever $\beta$ is not orthogonal to $v$, therefore, the agents have no finite best response: they can increase their utility to $+\infty$ by letting $\lambda$ go to $+\infty$ or $-\infty$. Hence, for this choice of $f$, best responses for the agents are not even well defined.

It is also natural to ask what happens when $f$ is a norm (rather than a norm raised to some power $r$ greater than one).
After all, any norm is positive homogeneous of degree 1.
In this case too, agent best responses are not necessarily well defined.
The issue is that norms do not exhibit \emph{diminishing returns} to the agent as he moves $\hat{x}$ increasingly far from $x$.
Thus, the agent's best response would always be either to not manipulate at all (resulting in truthfully setting $\hat{x} = x$), or else to manipulate as much as possible, with arbitrarily high reward achievable.

\begin{claim} \label{claim:norms-no-good}
  Suppose the agent's distance function is $d(\hat{x},x)=\|\hat{x}-x\|$ for some norm $\|\cdot\|$, and let $\|\cdot\|_*$ be the dual norm.
  Then his best response is $\hat{x} = x$ for $\|\beta\|_* \leq 1$; and if $\|\beta\|_* > 1$, his best response is undefined with arbitrarily high utility achievable.
\end{claim}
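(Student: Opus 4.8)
The plan is to reduce the agent's maximization to a one-dimensional ray analysis via the same change of variables used earlier in this section. Writing $w = \hat{x} - x$, the utility becomes $u(\hat{x},x,\beta) = \innprod{x}{\beta} + \innprod{w}{\beta} - \|w\|$, and since $\innprod{x}{\beta}$ is a constant independent of the agent's choice, maximizing $u$ over $\hat{x}$ is equivalent to maximizing $g(w) = \innprod{w}{\beta} - \|w\|$ over $w \in \Rd$. The agent truthfully reports (i.e. $\hat{x} = x$) exactly when $w = 0$ is a maximizer of $g$, so the whole claim follows once we understand the maximizers of $g$.

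The two facts I would invoke are the defining duality between a norm and its dual norm: first, the generalized Cauchy--Schwarz/H\"older inequality $\innprod{w}{\beta} \leq \|\beta\|_* \, \|w\|$, valid for all $w$; and second, that $\|\beta\|_* = \sup_{\|w\| \leq 1} \innprod{w}{\beta}$ is attained at some $w_0$ with $\|w_0\| = 1$ when $\beta \neq 0$ (the unit ball is compact in $\Rd$ and the linear functional is continuous, so the supremum is a maximum, and since the functional is linear the maximum over the ball sits on its boundary).

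For the first case $\|\beta\|_* \leq 1$, the inequality gives $g(w) \leq (\|\beta\|_* - 1)\|w\| \leq 0 = g(0)$ for every $w$, so $w = 0$ is a maximizer and $\hat{x} = x$ is a best response (and the unique one when $\|\beta\|_* < 1$, since then the middle inequality is strict for $w \neq 0$). For the second case $\|\beta\|_* > 1$, note first that $\beta \neq 0$, so the maximizing direction $w_0$ with $\|w_0\| = 1$ and $\innprod{w_0}{\beta} = \|\beta\|_*$ exists. Plugging in $w = t w_0$ for $t > 0$ yields $g(t w_0) = t\bigl(\|\beta\|_* - 1\bigr)$, which tends to $+\infty$ as $t \to \infty$ since $\|\beta\|_* - 1 > 0$. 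Hence the supremum is unbounded, no maximizer exists, and arbitrarily large utility is achievable.

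There is no serious obstacle here; the only point requiring a touch of care is justifying that the dual-norm supremum is attained, which follows from compactness of the unit ball in finite dimensions. Conceptually, the statement is exactly the failure of the \emph{diminishing-returns} property flagged in the surrounding discussion: because both the linear gain $\innprod{w}{\beta}$ and the cost $\|w\|$ are positively homogeneous of degree one, the net reward along any ray is linear in the scaling $t$, so the sign of $\|\beta\|_* - 1$ produces a clean dichotomy between ``never deviate'' and ``deviate without bound,'' with no interior optimum possible.
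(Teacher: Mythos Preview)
Your proof is correct and follows essentially the same approach as the paper: both reduce via $w=\hat{x}-x$ to analyzing $\sup_w\bigl(\innprod{w}{\beta}-\|w\|\bigr)$ and split on whether $\|\beta\|_*\le 1$. The only difference is packaging: the paper invokes the standard fact that the convex conjugate of a norm is the $0$--$\infty$ indicator of the dual unit ball, whereas you unpack that computation directly using the duality inequality $\innprod{w}{\beta}\le\|\beta\|_*\|w\|$ and a compactness argument to exhibit an unbounded ray when $\|\beta\|_*>1$. Your version is a touch more self-contained and also makes explicit that $w=0$ is the maximizer in the $\|\beta\|_*\le 1$ case (the paper leaves this implicit in the conjugate formula), but the underlying idea is identical.
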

\begin{proof}
  Let $f$ be some norm, i.e. $f(z) = \|z\|$.
  The agent wishes to solve
  \begin{align*}
    \sup_{\hat{x}} ~ (\langle \hat{x}, \beta \rangle &- f(\hat{x}-x))
      = \\
      &=\langle x,\beta \rangle + \sup_{\hat{x}} ~ (\langle \hat{x} - x, \beta \rangle - f(\hat{x}-x))  \\
      &= \langle x,\beta \rangle + \sup_{x'} \langle x',\beta \rangle - f(x')  \\
      &= \langle x,\beta \rangle + f^*(\beta)
  \end{align*}
  where $f^*$, the conjugate of $f$, is the $(0-\infty)$ indicator function of the unit ball in the dual norm~\citep{rockafellar2015convex}, namely
    \[
    f^*(\beta) = \begin{cases}
      0 & ||\beta||_* \leqslant1 \\
      \infty  & \text{otherwise.}
      \end{cases}
    \]
  This implies the theorem.
\end{proof}

\section{Regret Minimization}\label{sec:opt}
In this section, we will present an online convex optimization method
that has sub-linear Stackelberg regret. Our algorithm is a slight
variant of the bandit algorithm in \cite{flaxman2005online}, which
only uses zeroth-order feedback. That is, it only observes information of the form (input, function value).
In our setting, the learner always has access to zeroth-order information, as she observes the loss function at a particular choice of parameters $\beta_t$.
However, when a non-strategic agent arrives, the learner can also deduce \emph{first-order} information, that
is, the subgradient of the loss evaluated at $\beta_t$.
Our algorithm takes advantage of this mixture of feedback
and performs a full subgradient descent update for every non-strategic
round. As a result, we obtain a regret rate that depends on the
fraction of strategic agents that show up.


\subsection{Convex Optimization with Mixture Feedback}

In this section, we abstract away from the particular forms of the loss function until they become relevant.
At a high level, our algorithm is running (stochastic) subgradient descent to minimize a sequence of convex loss functions $c_t(\beta)$ while the learner proposes a sequence of parameters $\beta_t$.
The key challenge is to obtain subgradient feedback based on the agent's data
$(\hat x, y)$.  In each non-strategic round, we can explicitly compute
a subgradient in $\partial c_t(\beta_t)$. In other words, we have first-order information.
In each strategic round,
however, since $\hat x_t$ is a function of $\beta_t$, we do not have
direct access to the subgradients of $c_t$, and only see the ``zeroth-order'' information $c_t(\beta_t)$.

To estimate a subgradient with this information, we utilize the approach of \citet{flaxman2005online}
and minimize
a ``smoothed'' version of the classification loss. In particular, for
any $c_t$ and parameter $\delta$, the $\delta$-smoothed version of
$c_t$ is defined as:
\[
  \tilde{c_t} (\beta) = \mathbb{E}[c_t(\beta+\delta S)]
\]
where $S$ is a random vector drawn from the uniform distribution over
the unit sphere in $\mathbb{R}^d$. Notice that $\tilde c_t$ is both convex and differentiable.  \citet{flaxman2005online} provides
a method to obtain an unbiased estimate for the gradient of the
smoothed loss $\tilde c$. This is done by perturbing $\beta_t$ to get
a noisy $\beta_t^+$ and then evaluating the classification loss at
$\beta_t^+$. More formally:

\begin{lemma}[\cite{flaxman2005online}]\label{lem:grad}
  Let $\beta_t^+ = \beta_t + \delta\cdot S$ and
  $G_t = \frac{d}{\delta}\, c_t(\beta_t^+)\, S$, where $S$ is a random
  vector drawn from the uniform distribution over the unit sphere in
  $\mathbb{R}^d$. Then
  $\mathbb{E}[G_t] = \nabla \tilde{c_t}(\beta_t)$.
\end{lemma}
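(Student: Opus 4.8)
The plan is to recognize this as the standard spherical-smoothing gradient-estimate identity and to prove it via the divergence theorem. The essential observation is that although the estimator $G_t$ draws $S$ from the unit \emph{sphere}, the smoothed function $\tilde{c}_t$ must be read as an average of $c_t$ over the unit \emph{ball}; differentiating this ball-average and applying Stokes' theorem converts the resulting volume integral back into a surface (sphere) integral, which is exactly the form of $\E{G_t}$. I would flag the ball-versus-sphere distinction explicitly in the write-up, since the identity only holds when $\tilde{c}_t$ is the ball-average.

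Concretely, first I would write the smoothing as a normalized integral. Letting $B$ and $\mathbb{S}$ denote the unit ball and unit sphere in $\mathbb{R}^d$, with $\mathrm{vol}(\cdot)$ and $\mathrm{area}(\cdot)$ their Lebesgue and surface measures,
\[
  \tilde{c}_t(\beta) = \frac{1}{\mathrm{vol}(\delta B)} \int_{\delta B} c_t(\beta + v)\, dv .
\]
I would then differentiate under the integral sign and transfer the gradient from the parameter $\beta$ onto the integration variable $v$, using $\nabla_\beta c_t(\beta+v) = \nabla_v c_t(\beta+v)$, obtaining $\nabla \tilde{c}_t(\beta) = \frac{1}{\mathrm{vol}(\delta B)} \int_{\delta B} \nabla_v c_t(\beta+v)\, dv$.

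The core step applies the divergence theorem coordinate by coordinate (placing $c_t$ in one slot of a vector field at a time), which turns the integral of the gradient over the ball into an integral of $c_t$ against the outward unit normal over the bounding sphere. On $\delta\mathbb{S}$ the outward normal at a point $u$ is $u/\delta$, so
\[
  \int_{\delta B} \nabla_v c_t(\beta+v)\, dv = \int_{\delta \mathbb{S}} c_t(\beta + u)\, \frac{u}{\delta}\, dS(u) .
\]
Rewriting the surface integral as an expectation over $S$ uniform on the unit sphere via the substitution $u = \delta S$ (so $u/\delta = S$), and using the ratio $\mathrm{area}(\delta\mathbb{S})/\mathrm{vol}(\delta B) = d/\delta$ for the $d$-dimensional ball, yields
\[
  \nabla \tilde{c}_t(\beta) = \frac{\mathrm{area}(\delta\mathbb{S})}{\mathrm{vol}(\delta B)}\cdot \E{c_t(\beta + \delta S)\, S} = \frac{d}{\delta}\, \E{c_t(\beta + \delta S)\, S} = \E{G_t},
\]
which is the claim.

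The main obstacle is regularity: $c_t$ is only assumed convex, hence possibly non-differentiable, so the naive invocation of Stokes' theorem is not immediately licensed. I would resolve this either by noting that a convex function is locally Lipschitz and differentiable almost everywhere, so the divergence-theorem identity holds with the a.e.\ gradient, or — more robustly — by first establishing the identity for a smooth mollification of $c_t$ and then passing to the limit, using that both the ball-average and the estimator depend continuously (in the sup norm over the relevant compact neighborhood of $\beta_t$) on $c_t$. Everything else is routine measure-theoretic bookkeeping, so this regularity passage is the only genuinely delicate point.
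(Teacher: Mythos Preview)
The paper does not supply its own proof of this lemma; it is stated with attribution to \cite{flaxman2005online} and used as a black box (an earlier draft fragment even alludes to ``a useful fact based on divergence theorem in calculus''). Your divergence-theorem argument is precisely the standard proof from that reference, so there is nothing to compare. You are also right to flag the ball-versus-sphere issue: as written, the paper defines $\tilde{c}_t(\beta) = \mathbb{E}[c_t(\beta + \delta S)]$ with $S$ uniform on the \emph{sphere}, whereas the identity requires the smoothing to be over the solid \emph{ball}; this is a minor slip in the paper's statement that your write-up would correctly repair.
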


Note that the perturbed point $\beta_t^+$ may fall outside of the
feasible set $K$. To prevent that, we will optimize over a strict
subset $K_\delta = \{(1 - \delta)\beta \mid \beta \in
K\}$. Then we can guarantee that for any $\beta\in K_\delta$, the
$\delta$-ball centered at $\beta$ is fully contained in $K$.\footnote{
    Any $\beta_t^+ = (1-\delta)\beta + \delta v$ for some $\beta \in K$ and some unit vector $v$.
    We assume $K$ contains the unit ball, in particular, both $v$ and $\beta$ are in the convex set $K$, and $\beta_t^+$ is a convex combination of them so it also lies in $K$.}
    } 

Let $\theta = \frac{\#\{t:y_t=-1\}}{n}$ be the
fraction of rounds in which the learner interacts with a strategic agent. For simplicity, we assume for now that $\theta$ is known to the algorithm in advance, and use it to set the parameters $\eta$ and $\delta$ in our algorithm. However, as we discuss at the end of this section, any upper bound $\hat{\theta}$ on $\theta$ suffices, and can be used in its place, at the cost of only an additional additive constant to the final regret bound.
We are now ready to present our main algorithm in
\Cref{alg:mix}.

\begin{algorithm}[h]
  \caption{Convex optimization with mixture feedback}
 \label{alg:mix}
  \begin{algorithmic}
    \STATE{\textbf{Input:} smoothing parameter $\delta = \theta^{1/4}\cdot\sqrt{\frac{dMR}{L(R+3)}}\cdot n^{-1/4}$, subset to
      optimize over $K_\delta \subset K$, step size $\eta = R/\sqrt{n(\theta\cdot\frac{d^2M^2}{\delta^2}+(1-\theta)L^2)}$ \sw{DONE:todo:
        define}}

    \STATE{\textbf{For} each round $t$:}
    \INDSTATE Let $S_t$ be drawn uniformly from the unit sphere;
    \INDSTATE Let $\beta_t^+ = \beta_t + \delta\cdot S_t$;
    \INDSTATE Query at
    $\beta_t^+$, observe corresponding $\hat{x}_t, y_t$, and suffer
    loss $c_t(\beta_t^+)$;
    \INDSTATE Compute stochastic subgradient:
    \INDSTATE{\textbf{If} $y_t = 1$, then set $g_t$ as a subgradient
      in $\partial c_t(\beta_t)$; \textbf{otherwise} set
      $g_t = \frac{d}{\delta}\cdot c_t(\beta_t^+)\cdot S_t$ .}
    \INDSTATE \textbf{Update:}
    $\beta_{t+1} = \Pi_{K_\delta}(\beta_t - \eta g_t)$, where $\Pi_{K_\delta}$ is the Euclidean projection map onto the set $K_\delta$.
    \end{algorithmic}
  \end{algorithm}

Now we will proceed to bound the regret rate of our algorithm.  To
facilitate our analysis, we will make the following well-behaved
assumption on the loss function $c_t$ for all rounds $t$:
\begin{itemize}
\item $|c_t(\beta)|\leqslant M$ for any $\beta\in K$, and
\item $c_t$ is $L$-Lipschitz (with respect to the $\ell_2$ norm) over
  the set $K$. 
\end{itemize}

In \Cref{sec:const}, we will work out the parameters $M$ and $L$ for both
logistic loss and hinge loss, along with distance function of the form
$d_t(x,y) = f(A_t(x-y))$ for the strategic agents.


The following lemma allows us to bound the regret separately in
non-strategic and strategic rounds, and also bounds the additional
regret due to the $\delta$-smoothing of our
losses.

\begin{lemma}\label{lem:tilde}
  Let $T_{1}:=\{t:y_t=1,t=1,2,\ldots,n\}$ be the non-strategic rounds
  and $T_{-1}:=\{t:y_t=-1,t=1,2,\ldots,n\}$ be the strategic
  rounds. Then for any $\beta\in K$,
\[
  \mathbb{E}\left[\sum_{t=1}^n c_t(\beta_t^+)-c_t(\beta)
  \right]\leqslant\mathbb{E}\left[\sum_{t\in T_{-1}}
  \tilde{c_t}(\beta_t)-\tilde{c_t}(\beta) \right]
  +
  \mathbb{E}\left[\sum_{t\in T_{1}} c_t(\beta_t)-c_t(\beta) \right]
  + 3nL\delta,
\]
where the expectation is taken over the randomness of the algorithm.
\end{lemma}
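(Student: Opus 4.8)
The plan is to prove the inequality \emph{pointwise} — for every realization of the algorithm's random directions $S_1,\dots,S_n$ — and only take expectations at the very end. Because the adversary is oblivious, the index sets $T_1$ and $T_{-1}$ are fixed in advance and $3nL\delta$ is a deterministic constant, so the whole claim reduces to a careful per-round accounting of Lipschitz errors. First I would record two elementary facts. (i) $|c_t(\beta_t^+) - c_t(\beta_t)| \le L\delta$: this holds because $\|\beta_t^+ - \beta_t\|_2 = \|\delta S_t\|_2 = \delta$ and $c_t$ is $L$-Lipschitz, where I also use that $\beta_t^+ \in K$ by construction (we optimize over $K_\delta$ precisely so the $\delta$-perturbation stays feasible). (ii) $|\tilde c_t(\beta') - c_t(\beta')| \le L\delta$ for every $\beta'$: writing $\tilde c_t(\beta') - c_t(\beta') = \mathbb{E}_S[c_t(\beta'+\delta S) - c_t(\beta')]$ and bounding each summand by $L\|\delta S\|_2 \le L\delta$ via Jensen's inequality and Lipschitzness.

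With these in hand I would split the round-$t$ term $c_t(\beta_t^+) - c_t(\beta)$ according to whether the agent is non-strategic or strategic. On a non-strategic round $t \in T_1$ the learner observes a true subgradient of $c_t$, so I keep $c_t$ in the comparison and use only fact (i): $c_t(\beta_t^+) - c_t(\beta) \le c_t(\beta_t) - c_t(\beta) + L\delta$, costing a single factor of $L\delta$. On a strategic round $t \in T_{-1}$ the learner only has zeroth-order access and will later rely on the smoothed surrogate $\tilde c_t$, whose gradient the estimator $g_t$ is unbiased for by \Cref{lem:grad}; accordingly I chain facts (i) and (ii) to pass from $c_t$ to $\tilde c_t$ on both sides, $c_t(\beta_t^+) \le c_t(\beta_t) + L\delta \le \tilde c_t(\beta_t) + 2L\delta$ and $c_t(\beta) \ge \tilde c_t(\beta) - L\delta$, so that $c_t(\beta_t^+) - c_t(\beta) \le \tilde c_t(\beta_t) - \tilde c_t(\beta) + 3L\delta$.

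Summing over all rounds, the accumulated Lipschitz error is $|T_1|L\delta + 3|T_{-1}|L\delta$, which is at most $3nL\delta$ since $|T_1| + |T_{-1}| = n$. Taking expectations over the algorithm's randomness — the right-hand sides involve the random iterates $\beta_t$, but the error constant is deterministic — yields exactly the stated bound.

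I do not expect a genuine obstacle here: this is a reduction in the spirit of \cite{flaxman2005online}, and the only thing to get right is the bookkeeping that produces exactly three factors of $L\delta$ on strategic rounds (the perturbation error, the $c_t \to \tilde c_t$ passage at $\beta_t$, and the $c_t \to \tilde c_t$ passage at the comparator) while spending only one factor on non-strategic rounds. Two minor points warrant care. First, fact (ii) applied at the comparator $\beta \in K$ evaluates $c_t$ at points $\beta + \delta S$ that may leave $K$, so I should invoke $L$-Lipschitzness on a fixed bounded enlargement of $K$ (a $\delta$-neighborhood); this is harmless because the logistic and hinge losses, after substituting the agent's best response, remain Lipschitz on such a bounded region, with the constant $L$ computed in \Cref{sec:const}. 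Second, I should emphasize that this lemma deliberately does \emph{not} use the unbiasedness of the gradient estimator — that enters only in the subsequent step, which bounds $\sum_{t\in T_{-1}} \tilde c_t(\beta_t) - \tilde c_t(\beta)$ and $\sum_{t\in T_1} c_t(\beta_t) - c_t(\beta)$ through the online (sub)gradient descent regret of \Cref{alg:mix}.
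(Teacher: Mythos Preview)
Your proposal is correct and follows essentially the same approach as the paper: establish the two Lipschitz facts $|c_t(\beta_t^+)-c_t(\beta_t)|\le L\delta$ and $|\tilde c_t(\beta')-c_t(\beta')|\le L\delta$, spend one $L\delta$ on non-strategic rounds and three on strategic rounds, sum, and take expectations. Your side remark about needing Lipschitzness on a $\delta$-enlargement of $K$ when applying fact~(ii) at the comparator is a nice observation that the paper glosses over.
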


\begin{proof}
  Fix any $\beta\in K_\delta$.  Since the vector $S$ is drawn from the
  unit sphere,
  $|c_t(\beta+\delta S) - c_t(\beta)|\leqslant L\|\delta S\|= L
  \delta$. Hence by taking expectation over $S$,
\begin{equation}\label{eqn:Ld}
|\tilde{c_t} (\beta) - c_t(\beta)|\leqslant L \delta.
\end{equation}

Recall that $\beta_t^+ = \beta_t + \delta\cdot S$, by Lipschitz
property again, $|c_t(\beta_t^+) - c_t(\beta_t)| \leqslant L\delta$.
For any $t\in T_{-1}$, we have
\begin{align}\label{eqn:2Ld}
\begin{split}
|c_t(\beta_t^+) - \tilde{c_t} (\beta_t)| &\leqslant |c_t(\beta_t^+) - c_t(\beta_t)| + | c_t(\beta_t) - \tilde{c_t} (\beta_t)| \\
&\leqslant L \delta +L \delta = 2L \delta.
\end{split}
\end{align}
Now, we want to bound the difference between the two expressions,
$c_t(\beta_t^+)-c_t(\beta)$ and
$\tilde{c_t}(\beta_t)-\tilde{c_t}(\beta)$. By \Cref{eqn:2Ld}, we have
$|c_t(\beta_t^+) - \tilde{c_t}(\beta_t)| \leq 2L\delta$, and by \Cref{eqn:Ld}, we have
$|c_t(\beta) - \tilde{c_t}(\beta)| \leq L\delta$. It follows that
\[
c_t(\beta_t^+)-c_t(\beta)\leqslant \tilde{c_t}(\beta_t)-\tilde{c_t}(\beta) + 3L\delta.
\]
Similarly, for any $t\in T_1$, we can bound
\[
c_t(\beta_t^+)-c_t(\beta)\leqslant c_t(\beta_t)-c_t(\beta) + L\delta.
\]
Summing over all rounds $t$, we recover the stated bound.
\end{proof}

By bounding the regret separately in non-strategic and strategic
rounds, we can then bound the Stackelberg regret of \Cref{alg:mix}.
\begin{theorem}\label{thm:regret}
  For any sequence of agents $A = (a_1, \ldots, a_n)$, \Cref{alg:mix}
  will output a sequence of classifiers
  $B= (\beta_1, \ldots, \beta_n)$ such that the expected Stackelberg regret  satisfies
\[
  \mathbb{E}\left[ \mathcal{R}_S(A,B) \right]
  \leqslant \frac{\eta}{2}[n\theta\cdot\frac{d^2M^2}{\delta^2} +
  n(1-\theta)L^2]
  + \frac{1}{2\eta}R^2 + 3nL\delta + n L R\delta.
\]
\end{theorem}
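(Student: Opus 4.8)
The plan is to reduce the Stackelberg regret to a standard online convex optimization regret using \Cref{lem:tilde}, and then to run a single projected subgradient descent analysis that handles the strategic and non-strategic rounds in a unified way. Since the classifier actually deployed in round $t$ is the perturbed point $\beta_t^+$, the learner's cumulative loss is $\sum_{t=1}^n c_t(\beta_t^+)$, and \Cref{lem:tilde} gives, for every comparator $\beta \in K_\delta$,
\[
\mathbb{E}\left[\sum_{t=1}^n c_t(\beta_t^+) - c_t(\beta)\right] \leq \mathbb{E}\left[\sum_{t \in T_{-1}} \tilde{c}_t(\beta_t) - \tilde{c}_t(\beta)\right] + \mathbb{E}\left[\sum_{t \in T_1} c_t(\beta_t) - c_t(\beta)\right] + 3nL\delta .
\]
It therefore suffices to bound the two surrogate regret sums on the right.

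To treat both sums at once, I would define the per-round convex function $f_t = \tilde{c}_t$ for strategic rounds $t \in T_{-1}$ and $f_t = c_t$ for non-strategic rounds $t \in T_1$; since the adversary is oblivious, the $f_t$ are fixed convex functions. The key observation is that the stochastic vector $g_t$ produced by \Cref{alg:mix} is, in conditional expectation, a subgradient of $f_t$ at $\beta_t$: on non-strategic rounds $g_t \in \partial c_t(\beta_t)$ by construction, and on strategic rounds $\mathbb{E}[g_t \mid \beta_t] = \nabla \tilde{c}_t(\beta_t) \in \partial f_t(\beta_t)$ by \Cref{lem:grad}.

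Next I would carry out the textbook projected subgradient descent argument on the sequence $f_t$. Because $\beta \in K_\delta$ and the projection $\Pi_{K_\delta}$ is nonexpansive, the update satisfies $\|\beta_{t+1} - \beta\|^2 \leq \|\beta_t - \beta\|^2 - 2\eta \langle g_t, \beta_t - \beta\rangle + \eta^2 \|g_t\|^2$. Rearranging, summing over $t$, telescoping the potential, and taking expectations (using the tower rule together with the subgradient property above and convexity of $f_t$) yields
\[
\sum_{t=1}^n \mathbb{E}\left[f_t(\beta_t) - f_t(\beta)\right] \leq \frac{\|\beta_1 - \beta\|^2}{2\eta} + \frac{\eta}{2}\sum_{t=1}^n \mathbb{E}\left[\|g_t\|^2\right] .
\]
I then bound the second moments by regime: $\|g_t\|^2 \leq L^2$ on non-strategic rounds (as $g_t$ is a subgradient of the $L$-Lipschitz $c_t$), and $\|g_t\|^2 = \frac{d^2}{\delta^2} c_t(\beta_t^+)^2 \|S_t\|^2 \leq \frac{d^2 M^2}{\delta^2}$ on strategic rounds (as $\|S_t\| = 1$ and $|c_t| \leq M$). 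With $|T_{-1}| = n\theta$, $|T_1| = n(1-\theta)$, and $\|\beta_1 - \beta\|^2 \leq R^2$, this produces exactly the $\frac{\eta}{2}[n\theta \frac{d^2 M^2}{\delta^2} + n(1-\theta)L^2]$ and $\frac{R^2}{2\eta}$ terms of the claimed bound.

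The remaining, and I expect most delicate, step is that the algorithm optimizes over the shrunken feasible set $K_\delta$, whereas the Stackelberg benchmark is $\beta^* = \argmin_{\beta \in K}\sum_t c_t(\beta) \in K$. To bridge this, I would instantiate the combined bound at the surrogate comparator $\beta' = (1-\delta)\beta^* \in K_\delta$ and control the discretization error via Lipschitzness: since $\|\beta' - \beta^*\| = \delta\|\beta^*\| \leq R\delta$ and each $c_t$ is $L$-Lipschitz, we get $\sum_t [c_t(\beta') - c_t(\beta^*)] \leq nLR\delta$. Adding this to the bound for $\beta = \beta'$ accounts for the final $nLR\delta$ term and, together with the $3nL\delta$ term from \Cref{lem:tilde}, gives the stated bound on $\mathbb{E}[\mathcal{R}_S(A,B)]$.
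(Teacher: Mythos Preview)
Your proposal is correct and follows essentially the same approach as the paper's proof: reduce via \Cref{lem:tilde}, run the standard projected (stochastic) subgradient descent analysis with the per-round bounds $\|g_t\|\le L$ and $\|g_t\|\le dM/\delta$, and then pay an $nLR\delta$ penalty for restricting the comparator to $K_\delta$. Your unified definition $f_t=\tilde{c}_t$ on strategic rounds and $f_t=c_t$ otherwise, together with the tower rule, is just a slightly cleaner packaging of the paper's argument (which instead writes $g_t=\nabla\tilde{c}_t(\beta_t)-\xi_t$ with $\mathbb{E}[\xi_t]=0$ and uses the ``cosine law'' formulation of the same inequality); the choice of surrogate comparator $\beta'=(1-\delta)\beta^*$ versus the paper's ``closest point in $K_\delta$'' likewise makes no substantive difference.
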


\begin{proof}
  Our proof essentially follows from the standard analysis for online
  subgradient descent.  We will first bound
$$\mathbb{E}\left[\sum_{t\in T_{-1}} \tilde{c_t}(\beta_t)-\tilde{c_t}(\beta^*)\right] + \mathbb{E}\left[\sum_{t\in T_{1}} c_t(\beta_t)-c_t(\beta^*)\right]$$
which then allows us to apply Lemma \ref{lem:tilde}. By convexity, we get for any $\beta \in K$,
\[
\tilde{c_t}(\beta_t) - \tilde{c_t}(\beta)\leqslant \langle \nabla \tilde{c_t}(\beta_t), \beta_t - \beta \rangle
\]
and for any subgradient $G_t(\beta)\in\partial c_t(\beta_t)$,
\[
{c_t}(\beta_t) - {c_t}(\beta)\leqslant \langle G_t(\beta), \beta_t - \beta \rangle.
\]

\begin{align*}
\tilde{c_t}(\beta_t^+) - \tilde{c_t}(\beta) &=
[\tilde{c_t}(\beta_t^+) - \tilde{c_t}(\beta_t)] + [\tilde{c_t}(\beta_t) - \tilde{c_t}(\beta)]\\
							   &\leqslant  L_0 \delta + \tilde{c_t}(\beta_t) - \tilde{c_t}(\beta)
\end{align*}
Recall that for any $t\in T_{1}$, $g_t \in \partial {c_t}(\beta_t)$,
and by \Cref{lem:grad}, we know for $t\in T_{-1}$, we can write
$g_t = \nabla \tilde{c_t}(\beta_t) - \xi_t$ where $\xi_t$ is a random
vector with zero mean. Let
$\beta' \in \argmin_{\beta\in K_\delta} \sum_{t=1}^n c_t(\beta)$, then
\[
  \sum_{t\in T_{-1}}  [\tilde{c_t}(\beta_t) -\tilde{c_t}(\beta') ]
 + \sum_{t\in T_{1}} \left[ c_t(\beta_t)-c_t(\beta') \right]
 \leqslant \sum_{t=1}^n  \langle g_t, \beta_t - \beta' \rangle + \sum_{t\in T_{-1}} \langle \xi_t, \beta_t - \beta' \rangle.
\]
Taking the expectation, we have
\begin{equation}\label{eqn:twoSum}
\mathbb{E}[\sum_{t\in T_{-1}} \tilde{c_t}(\beta_t)-\tilde{c_t}(\beta')] + \mathbb{E}[\sum_{t\in T_{1}} c_t(\beta_t)-c_t(\beta')]\\
\leqslant \mathbb{E}\sum_{t=1}^n \langle g_t, \beta_t - \beta' \rangle.
\end{equation}
Now we can employ the standard cosine law trick (see e.g. \cite{bubeck2015convex}, \cite{ben2001lectures}) and get:
$$\langle g_t, \beta_t - \beta' \rangle \leqslant \frac{\eta}{2}\|g_t\|^2+ \frac{1}{2\eta} (\|\beta' - \beta_t\|^2 -\|\beta' - \beta_{t+1}\|^2)$$
Summing them up, telescoping and leaving out the negative term yields:
\begin{equation}\label{eqn:g_t}
\sum_{t=1}^n \langle g_t, \beta_t - \beta' \rangle \leqslant  \frac{\eta}{2} \sum_{t=1}^n \|g_t\|^2 + \frac{1}{2\eta}\|\beta' - \beta_1\|^2
\end{equation}
Since we have
\[
\|g_t\| \leqslant \left\{
			\begin{array}{ll}
			L, 		& y_t=0, \\
			\frac{d}{\delta}\cdot M, & y_t=1,
			\end{array}
			\right.
\]
Combining (\ref{eqn:twoSum}) and (\ref{eqn:g_t}) and using the above bound for $g_t$, we get
\begin{equation}\label{fun}
\mathbb{E}[\sum_{t\in T_{-1}} \tilde{c_t}(\beta_t)-\tilde{c_t}(\beta')] + \mathbb{E}[\sum_{t\in T_{1}} c_t(\beta_t)-c_t(\beta')]
 \leqslant \frac{\eta}{2} [n\theta\cdot\frac{d^2M^2}{\delta^2} + n(1-\theta)L^2] + \frac{1}{2\eta}R^2.
\end{equation}
Finally, we need bound the additional error we incur for restricting
to the subset $K_\delta$. Since for any point $\beta\in K$, the closet
point to $\beta$ in $K_\delta$ has $\ell_2$ distance no more than
$\delta R$, by the Lipschitz property of each $c_t$, we know
\begin{equation}\label{add}
  \sum_{t=1}^n c_t(\beta') - \min_{\beta\in K} \sum_{t=1}^n c_t(\beta)
  \leq n L R\delta
\end{equation}
Our regret bound then follows from \Cref{fun} and \Cref{add}.
\end{proof}


In
\Cref{cor:regret}, we provide a regret rate for three different regimes
of $p$ (hiding dependence on $M$, $L$, and $R$).

\begin{corollary}\label{cor:regret}
For different ranges of $\theta$, we can bound the expected Stackelberg regret by
\[
\mathbb{E}\left[\mathcal{R}_S(A, B)\right] \leqslant \left\{
			\begin{array}{ll}
			O(\sqrt{n}), 		& \theta=0, \\
			O(\sqrt{d}n^{3/4}), 		& \theta=\Omega(1), \\
			O(\sqrt{d}n^{1-\frac{1+\gamma}{4}})				& \theta=O(n^{-\gamma})
			\end{array}
			\right.
\]
\end{corollary}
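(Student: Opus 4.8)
The plan is to substitute the prescribed values of $\eta$ and $\delta$ into the general bound of \Cref{thm:regret} and simplify, treating $M$, $L$, and $R$ as constants. Write $V = \theta\cdot\frac{d^2M^2}{\delta^2} + (1-\theta)L^2$, so the bound reads $\frac{\eta}{2}nV + \frac{1}{2\eta}R^2 + nL\delta(R+3)$. The first step is to note that the chosen step size $\eta = R/\sqrt{nV}$ is exactly the one balancing the first two terms, collapsing them into $R\sqrt{nV}$; this is the standard online subgradient descent tradeoff and requires no new computation.

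The second step is to plug in $\delta = \theta^{1/4}\sqrt{\tfrac{dMR}{L(R+3)}}\,n^{-1/4}$. A short calculation gives $\delta^2 = \theta^{1/2}\frac{dMR}{L(R+3)}n^{-1/2}$, so the strategic contribution to $V$ is $\theta\frac{d^2M^2}{\delta^2} = \theta^{1/2}\frac{dML(R+3)}{R}n^{1/2}$, whence $nV = \theta^{1/2}\frac{dML(R+3)}{R}n^{3/2} + (1-\theta)L^2 n$. The smoothing term simplifies to $nL\delta(R+3) = \theta^{1/4}\sqrt{dMRL(R+3)}\,n^{3/4}$. Indeed $\delta$ was chosen precisely to balance this smoothing (bias) term against the strategic (variance) part of $R\sqrt{nV}$, so the two are of the same order. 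Using $\sqrt{a+b}\le\sqrt{a}+\sqrt{b}$ and hiding $M,L,R$, both $R\sqrt{nV}$ and $nL\delta(R+3)$ are $O(\theta^{1/4}\sqrt{d}\,n^{3/4} + \sqrt{n})$, giving the unified bound
\[
  \mathbb{E}\left[\mathcal{R}_S(A,B)\right] = O\!\left(\theta^{1/4}\sqrt{d}\,n^{3/4} + \sqrt{n}\right).
\]

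The third step is to read off the three regimes from this single expression. When $\theta=\Omega(1)$, $\theta^{1/4}=\Omega(1)$ and the $n^{3/4}$ term dominates $\sqrt{n}$ (since $\sqrt{d}\ge 1$), yielding $O(\sqrt{d}\,n^{3/4})$. When $\theta=O(n^{-\gamma})$, $\theta^{1/4}=O(n^{-\gamma/4})$, so the first term becomes $O(\sqrt{d}\,n^{3/4-\gamma/4}) = O(\sqrt{d}\,n^{1-(1+\gamma)/4})$; for $\gamma\le 1$ this exponent is at least $1/2$ and so dominates the $\sqrt{n}$ term, matching the stated bound (it is worth flagging that the stated form implicitly uses $\gamma\le 1$, since otherwise the $\sqrt{n}$ term would dominate).

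The one genuinely delicate point is the boundary case $\theta=0$, where the parameter choices degenerate ($\delta=0$ and $\frac{d^2M^2}{\delta^2}$ is formally undefined), so I would argue this case separately rather than by taking a limit. With no strategic rounds the algorithm never uses zeroth-order estimation, $\delta$-smoothing is unnecessary, and \Cref{alg:mix} reduces to projected online subgradient descent with $\eta = R/(L\sqrt{n})$; the standard analysis (equivalently \Cref{thm:regret} specialized to $\theta=0,\delta=0$) gives $R\sqrt{nV}=RL\sqrt{n}$ and vanishing smoothing term, i.e. the $O(\sqrt{n})$ rate of \cite{Zin03}. Handling this degeneracy of the smoothing parameter at $\theta=0$ is the main thing requiring care; the remainder is routine substitution and comparison of exponents.
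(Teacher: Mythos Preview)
Your proposal is correct and follows essentially the same approach as the paper: plug the prescribed $\eta$ into the bound of \Cref{thm:regret} to collapse the first two terms to $R\sqrt{nV}$, substitute the prescribed $\delta$, use $\sqrt{a+b}\le\sqrt{a}+\sqrt{b}$, and read off the three regimes from the resulting expression $O(\theta^{1/4}\sqrt{d}\,n^{3/4}+\sqrt{n})$. Your treatment is in fact slightly more careful than the paper's, which simply ``sets $\theta=0$'' without commenting on the degeneracy of $\delta$, and which does not flag the implicit restriction $\gamma\le 1$ in the third regime.
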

\begin{proof}
Plugging the algorithm parameter $\eta=R/\sqrt{n(\theta\cdot\frac{d^2M^2}{\delta^2}+(1-\theta)L^2)}$ into the regret bound in \Cref{thm:regret},
\[
\mathcal{R}_S \leqslant \sqrt{\theta\cdot\frac{d^2M^2}{\delta^2}+(1-\theta)L^2}\cdot\sqrt{n}\cdot R + n\delta L(R+3).
\]
Using $\sqrt{a+b}\leqslant \sqrt{a} + \sqrt{b}$, we can further simplify the bound:
\[
\mathcal{R}_S \leqslant \sqrt{n\theta}\cdot\frac{dMR}{\delta}+LR\sqrt{n(1-\theta)} + n\delta L(R+3).
\]
Plugging in $\delta = \theta^{1/4}\cdot\sqrt{\frac{dMR}{L(R+3)}}\cdot n^{-1/4}$ yields:
\[
\mathcal{R}_S \leqslant LR\sqrt{n(1-\theta)} + n^{3/4}\sqrt{dMLR(R+3)}\theta^{1/4}.
\]
The rest of the result follows by setting $\theta=0, \Omega(1)$ or $O(n^{-\gamma})$.
\end{proof}

  \paragraph{Relaxing $\theta$ to $\hat{\theta}$}{
    Let $\hat{\theta}$ be an upper bound on the true fraction of strategic agents in rounds $1,2,\ldots,n$, i.e. $\theta\leqslant\hat{\theta}$. We briefly explain why $\hat{\theta}$ can be used in place of $\theta$ in setting the parameters of \Cref{alg:mix}, and \Cref{cor:regret} still holds (with $\hat{\theta}$ replacing $\theta$ in the bound.)
    First, consider the case in which $\frac{dM}{\delta} \geqslant L$. In this case, the right hand side of the bound in \Cref{thm:regret} is increasing in $\theta$, so we can replace $\theta$ with $\hat{\theta}$ and still get a correct regret bound:
    \[
    \mathbb{E}\left[ \mathcal{R}_S(A,B) \right]
  \leqslant \frac{\eta}{2}[n\hat{\theta}\cdot\frac{d^2M^2}{\delta^2} +
  n(1-\hat{\theta})L^2]
  + \frac{1}{2\eta}R^2 + 3nL\delta + n L R\delta.
    \]
    Thus, we can carry out the calculations from the proof of \Cref{cor:regret}, again with $\hat{\theta}$ replacing $\theta$. Finally, it is not hard to show that when $n\geqslant(\frac{L}{dM})^2$, we are in the case in which $\frac{dM}{\delta} \geqslant L$. Since we assume $c_t(\beta)\leqslant M$, each round incurs regret at most $2M$. Therefore in the remaining case in which $n<(\frac{L}{dM})^2$, we can upper bound the regret by $2M \cdot n \leqslant 2M \cdot (\frac{L}{dM})^2$. Hence, we obtain the regret bound
    \[
    \mathcal{R}_S \leqslant \max\bigg\{LR\sqrt{n(1-\hat{\theta})} + n^{3/4}\sqrt{dMLR(R+3)}\hat{\theta}^{1/4}, \frac{2L^2}{d^2M} \bigg\}.
    \]
    The new (second) term in the maximum is independent of $n$, and so does not change the asymptotic bound stated in \Cref{cor:regret}.
}

\paragraph{Applying other tools for bandit convex optimization}{ We can in fact
  apply other tools for bandit convex optimization to minimize
  regret. For example, we can use the algorithm in
  \cite{bubeck2016kernel}, which will achieve a Stackelberg regret rate of
  $\tilde O(d^{9.5} \sqrt{n})$ even when all the agents are strategic (that is $p =1$).
  Our algorithm, which is a variant of \cite{flaxman2005online}, has a regret rate with
  a worse dependence on the time horizon, but has a milder dependence on the dimension. Furthermore, it also
  permits a natural interpolation between the fully strategic and fully non-strategic settings (as shown in \Cref{cor:regret}).
}

\paragraph{Deriving the Constants}{
In Appendix \ref{sec:const}, we first illustrate how to determine the constants $L$ and $M$ that appear in our regret bound for general classes of cost functions $d(x,y)$, namely any abstract norm raised to some power greater than 1:
\[
d(x,y) = \frac{1}{r}\|x-y\|^r.
\]
The constant $C$ in the inequality $\|x\|_2\leqslant C\|x\|$ can be used to determine $L$ and $M$. See \Cref{thm:norm} for details.

We also give explicit expressions for $L$ and $M$ when the abstract norm is some $p$-norm with $p\geqslant 1$. See \Cref{cor:pnorm_const}.

All assumptions and the resulting fully expanded regret bound are described in \Cref{thm:final_rate}. We state here a noteworthy special case that has an especially mild dependence on the dimension $d$:

When $1\leqslant p\leqslant 2$, and for any power $r > 1$, \Cref{alg:mix} has regret bound:
\[
\mathbb{E}\left[ \mathcal{R}_S(A,B) \right] \leqslant \left\{
      \begin{array}{ll}
      O(\sqrt{n}),    & \theta=0, \\
      O(\sqrt{d}n^{3/4}),     & \theta=\Omega(1), \\
      O(\sqrt{d}n^{1-\frac{1+\gamma}{4}}),        & \theta=O(n^{-\gamma}).
      \end{array}
      \right.
\]
i.e. for $p$ norms when $1\leqslant p\leqslant 2$, regardless of the power $r > 1$, we obtain a fixed (and mild) dependence on $d$. 
When $p>2$, the situation changes: our dependence on $d$ grows with $p$, and shrinks with $r$.

}


\section{Discussion and Open Questions} \label{sec:future}
Our work suggests a number of interesting directions. Broadly speaking, these can be grouped into two thrusts: \emph{broadening} the class of problems to which our approach can be applied, and \emph{weakening} the assumptions that need to be made about the agents. Here we list a concrete example of each type of question. 

The broad approach we take in this paper is to identify classes of utility functions for the strategic agents, that when paired with natural cost functions for the learner, lead to a convex objective, even in the strategic setting. However, this precludes several natural utility functions for the agents: most notably, utility functions that are defined in terms of 0/1 classification loss. What can be done in the face of non-convexity? When agent cost functions $d$ are known and \emph{separable}, \cite{HMPW16} show that even for 0/1 loss, Stackelberg equilibria have structure that allows for their efficient computation, despite non-convexity of the learner's objective. Can similar structure be taken advantage of in the setting we consider, when agent cost functions are not known to the learner? 

Next, the overarching motivation of this paper is to weaken the assumptions on the knowledge of the learner necessary to solve the strategic classification problem. We should also aim to weaken other assumptions. For example, we assume in this paper that strategic agents play exact best responses --- i.e. the always exactly optimize their utility function. What if strategic agents are not perfect -- i.e. they are only guaranteed to play approximate best responses (or, perhaps only \emph{usually} guaranteed to). Is it possible to give optimization procedures with guarantees that are robust to strategic agent imperfection?

\bibliography{mybib}
\bibliographystyle{apalike}

\appendix

\section{Concrete Regret Bounds for Specific Examples}\label{sec:const}
We bounded the Stackelberg regret of our algorithm in terms of problem-dependent constants $L$ and $M$. In this section, we go through several examples, in which we work out explicit bounds on $L$ and $M$, first for abstract norms, then for the class of $L^p$ norms.
\paragraph{Note:}{In this section, the norm notation without any subscript, $\|\cdot\|$, denotes an abstract norm in $\Rd$. $\|\cdot\|_*$ denotes its dual norm. $\|\cdot\|_p$ denotes the Euclidean $p$-norm, i.e. $\|x\|_p^p = \sum_{i=1}^d |x_i|^p$ when $1\leqslant p <+\infty$ and $\|x\|_\infty = \max_i |x_i|$.}
\vspace{2mm}\\
Recall from the previous section that when the distance function $d(x,y)$
in the strategic agent's utility function has the form
$$
d(x,y) = \frac{1}{r}\|x-y\|^r
$$
we showed that
$$
\langle \beta, \hat{x}(\beta) \rangle = \langle \beta, x \rangle + \|\beta\|_*^s.
$$
Again $1/r+1/s=1, r,s>1$.\\
We make the following assumptions:
\begin{itemize}
	\item $\|x\|_2 \leqslant R_1$. That is, the ``real'' feature vectors lie in a ball of radius $R_1$;
	\item $\|\beta\|_2 \leqslant R_2$. That is, our target parameters lie in a ball of radius $R_2$.
\end{itemize}
With the above assumptions, we have the following theorem:
\begin{theorem}\label{thm:norm}
For logistic loss, $h(z) = \log(1+e^{-z})$, and hinge loss, $h(z) = (1-z)_+$, and for any norm $\|\cdot\|$ such that  $\|x\|_2 \leqslant C \|x\|$, the loss function $c(x,y,\beta) = h(y\langle \beta, \hat{x}(\beta) \rangle)$ has the following properties:
\begin{enumerate}
	\item $|c(x,y,\beta)|\leqslant 1+R_1R_2+ C^s R_2^s.$ 
	\item $c(x,y,\beta)$ is $L$-Lipschitz in $\beta$, where $L$ can be taken to be
	\[
	R_1 + sC^sR_2^{s-1}.
	\]
\end{enumerate}
\end{theorem}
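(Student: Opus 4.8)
The plan is to reduce both claims to properties of the scalar quantity $g(\beta) := \langle \beta, \hat{x}(\beta)\rangle$, since $c(x,y,\beta) = h(y\,g(\beta))$ with $h$ equal to either logistic or hinge loss. First I would record two elementary facts about $h$ that hold in both cases. (i) $h$ is nonnegative and satisfies $h(z) \le 1 + |z|$: for logistic loss $1 + e^{-z} \le 2e^{|z|}$ gives $h(z) \le \log 2 + |z| \le 1 + |z|$, and for hinge loss $(1-z)_+ \le 1 + |z|$ directly. (ii) $h$ is $1$-Lipschitz, since its derivative lies in $[-1,0]$ in both cases. Because $|y| = 1$, these give $|c| = h(y\,g(\beta)) \le 1 + |g(\beta)|$ and $|c(x,y,\beta_1) - c(x,y,\beta_2)| \le |g(\beta_1) - g(\beta_2)|$. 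So it suffices to bound $|g|$ pointwise and to bound the $\ell_2$-Lipschitz constant of $g$. Recall $g(\beta) = \langle \beta, x\rangle + \|\beta\|_*^s$ in the strategic case $y=-1$ and $g(\beta) = \langle \beta, x\rangle$ in the non-strategic case; since the second is the first with the $\|\beta\|_*^s$ term dropped, I focus on $y=-1$, and the same bounds cover $y=1$.

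The single ingredient tying everything together is the dual-norm comparison $\|\beta\|_* \le C\|\beta\|_2$. This follows from the hypothesis $\|x\|_2 \le C\|x\|$: the unit ball of $\|\cdot\|$ is contained in the Euclidean ball of radius $C$, so $\|\beta\|_* = \sup_{\|x\| \le 1}\langle \beta, x\rangle \le \sup_{\|x\|_2 \le C}\langle \beta, x\rangle = C\|\beta\|_2$. For the magnitude bound, Cauchy--Schwarz gives $|\langle \beta, x\rangle| \le R_1 R_2$, and $\|\beta\|_*^s \le (C\|\beta\|_2)^s \le C^s R_2^s$; hence $|g(\beta)| \le R_1 R_2 + C^s R_2^s$ and $|c| \le 1 + R_1 R_2 + C^s R_2^s$, which is part~1.

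For the Lipschitz constant I would bound the $\ell_2$-norm of subgradients of $g$ over $K$. The linear part $\langle \beta, x\rangle$ contributes the subgradient $x$ with $\|x\|_2 \le R_1$. For $\phi(\beta) := \|\beta\|_*^s$, the subgradient chain rule for the increasing convex outer function $t \mapsto t^s$ gives $\partial \phi(\beta) = s\|\beta\|_*^{s-1}\,\partial\|\beta\|_*$ for $\beta \ne 0$ (and $\partial\phi(0) = \{0\}$ since $s>1$). Any $u \in \partial\|\beta\|_*$ satisfies $\|u\| \le 1$, because the subdifferential of the norm $\|\cdot\|_*$ is contained in the unit ball of its dual, which is $\|\cdot\|$ by biduality; therefore $\|u\|_2 \le C\|u\| \le C$. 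Combining with $\|\beta\|_*^{s-1} \le (CR_2)^{s-1}$ yields $\big\|\,s\|\beta\|_*^{s-1}u\,\big\|_2 \le s\,C^{s-1}R_2^{s-1}\cdot C = s\,C^s R_2^{s-1}$. A subgradient of $g$ thus has $\ell_2$-norm at most $R_1 + s\,C^s R_2^{s-1}$, which bounds the Lipschitz constant and gives the stated $L$; the non-strategic case only drops the $\phi$ term, so the same $L$ applies.

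I expect the Lipschitz bound on $\phi(\beta) = \|\beta\|_*^s$ to be the main obstacle, since the dual norm need not be differentiable and the argument must proceed through subgradients rather than gradients. Concretely, it relies on two facts I would cite from \cite{rockafellar2015convex}: the subgradient chain rule for the composition with $t \mapsto t^s$, and the characterization of the subdifferential of a norm as a subset of the unit ball of its dual norm. Everything else is a direct computation from the dual-norm comparison $\|\beta\|_* \le C\|\beta\|_2$ together with the assumptions $\|x\|_2 \le R_1$ and $\|\beta\|_2 \le R_2$.
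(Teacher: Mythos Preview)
Your proposal is correct and follows essentially the same route as the paper: reduce to bounding $|\langle \beta,\hat{x}(\beta)\rangle|$ and its Lipschitz constant via the dual-norm comparison $\|\beta\|_*\le C\|\beta\|_2$, then apply the $1$-Lipschitz and $h(z)\le 1+|z|$ properties of the outer loss. The only cosmetic difference is that the paper bounds the Lipschitz constant of $\|\beta\|_*^s$ by composing Lipschitz constants (the map $\beta\mapsto\|\beta\|_*$ is $C$-Lipschitz in $\ell_2$, and $t\mapsto t^s$ is $s(CR_2)^{s-1}$-Lipschitz on the relevant range), whereas you compute the same bound through the subdifferential; the two arguments are equivalent.
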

\begin{remark}
For any norm, there is some such constant $C$ as required in the theorem, since any two norms on $\Rd$ are equivalent. Note that $C$ may depend on the  dimension $d$.
\end{remark}
\begin{proof}
Since the 2-norm is self-dual, the following lemma holds
\begin{lemma}\label{lem:dual} The norm pair $(\|\cdot\|,\|\cdot\|_*)$ and constant $C>0$ satisfy
$$\|x\|_2 \leqslant C \|x\|, \forall x\in\Rd$$ if and only if they satisfy $$\|\beta\|_* \leqslant C \|\beta\|_2$$.
\end{lemma}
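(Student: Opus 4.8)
The plan is to reduce both directions to a single general principle about how a pointwise domination between two norms dualizes, and then specialize it using the self-duality of the Euclidean norm. Recall that the dual norm admits the variational description $\|\beta\|_* = \sup_{x \neq 0} \frac{\langle \beta, x\rangle}{\|x\|}$, and that the $2$-norm is self-dual, i.e. $\sup_{x\neq 0}\frac{\langle\beta,x\rangle}{\|x\|_2} = \|\beta\|_2$. The underlying fact I want is: for any two norms $N_1, N_2$ on $\Rd$, one has $N_1(x) \leqslant C\, N_2(x)$ for all $x$ if and only if $N_2^*(\beta) \leqslant C\, N_1^*(\beta)$ for all $\beta$. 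Lemma~\ref{lem:dual} is exactly this fact with $N_1 = \|\cdot\|_2$ and $N_2 = \|\cdot\|$, after applying $N_1^* = \|\cdot\|_2$ (self-duality) and $N_2^* = \|\cdot\|_*$.

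For the forward direction, I would assume $\|x\|_2 \leqslant C\|x\|$ and work directly from the variational formula. For any $\beta \neq 0$, the supremum defining $\|\beta\|_*$ is nonnegative (take $x = \beta$), so it suffices to consider directions $x$ with $\langle \beta, x\rangle > 0$. For such $x$, the hypothesis $\|x\| \geqslant \tfrac{1}{C}\|x\|_2$ gives $\frac{\langle\beta,x\rangle}{\|x\|} \leqslant C\,\frac{\langle\beta,x\rangle}{\|x\|_2}$. Taking the supremum over $x \neq 0$ and invoking self-duality of the $2$-norm yields $\|\beta\|_* \leqslant C\,\|\beta\|_2$, as desired.

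For the converse I would run the symmetric argument, but now I need biduality (that the double dual of a norm on $\Rd$ is the norm itself, a consequence of finite dimensionality / Hahn–Banach). Assuming $\|\beta\|_* \leqslant C\|\beta\|_2$, I write $\|x\| = \|x\|_{**} = \sup_{\beta\neq 0}\frac{\langle\beta,x\rangle}{\|\beta\|_*}$; restricting to $\beta$ with $\langle\beta,x\rangle > 0$ and using $\|\beta\|_* \leqslant C\|\beta\|_2$ gives $\frac{\langle\beta,x\rangle}{\|\beta\|_*} \geqslant \frac{1}{C}\frac{\langle\beta,x\rangle}{\|\beta\|_2}$, and taking the supremum produces $\|x\| \geqslant \tfrac{1}{C}\|x\|_2$, i.e. $\|x\|_2 \leqslant C\|x\|$.

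I expect no deep obstacle here; the only thing requiring care is the bookkeeping of inequality directions, since dualization reverses which norm plays the role of upper and lower bound, and the restriction to directions with positive inner product (so that multiplying through by the domination constant preserves the inequality). The one genuinely nontrivial ingredient is biduality, which is needed solely for the converse; it is standard in finite dimensions, so I would simply cite it rather than reprove it.
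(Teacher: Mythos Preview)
Your proposal is correct and essentially the same as the paper's: both rely on the variational definition of the dual norm and the self-duality of $\|\cdot\|_2$, with biduality $\|\cdot\|_{**}=\|\cdot\|$ supplying the converse. The paper phrases the forward step as a unit-ball inclusion (from $\|x\|_2\leqslant C\|x\|$ one gets $\{\|x\|=1\}\subseteq\{\|x\|_2\leqslant C\}$, then bounds the sup defining $\|\cdot\|_*$), while you use the equivalent ratio form $\|\beta\|_*=\sup_{x\neq 0}\langle\beta,x\rangle/\|x\|$; these are two presentations of the same computation.
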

\begin{proof}
We only show the ``only if'' direction. The other follows exactly the same reasoning.\\
Assume that $\|x\|_2 \leqslant C \|x\|, \forall x\in\Rd$. Then $\{x:\|x\|=1\}$ is a subset of $\{x:\|x\|_2\leqslant C\}$. Hence for any $\beta\in\Rd$,
$$
\|\beta\|_* = \sup_{x:\|x\|=1} \innprod{\beta}{x} \leqslant \sup_{x:\|x\|_2\leqslant C}\innprod{\beta}{x} = C\cdot \sup_{x:\|x\|_2\leqslant 1}\innprod{\beta}{x} = C\|\beta\|_2.
$$
This proves the ``only if'' direction.
\end{proof}
Both $h(t) = \log(1+e^{-t})$ and $h(t) = (1-t)_+$ satisfy
\[0\leqslant h(t) \leqslant |t|+1, |h'(t)|\leqslant 1\]
for any $t\in\mathbb{R}$.

As a consequence,
\begin{align*}|c(x,y,&\beta)| = |h(y\langle \beta, \hat{x}(\beta) \rangle)|\leqslant\\
 &\leqslant 1 + |y\langle \beta, \hat{x}(\beta) \rangle)| = 1 + |\langle \beta, \hat{x}(\beta) \rangle)|.
\end{align*}
So in order to bound $|c(x,y,\beta)|$, it suffices to bound $|\langle \beta, \hat{x}(\beta) \rangle|$ as follows:
\begin{align*}
|\langle \beta, \hat{x}(\beta) \rangle| &\leqslant |\langle \beta, x \rangle| + \|\beta\|_*^s \\
&\leqslant \|\beta\|_2\cdot\|x\|_2 + (C\|\beta\|_2)^s\\
& \leqslant R_1R_2+ C^s R_2^s.
\end{align*}
This yields the bound on $|c(x,y,\beta)|$ as stated in the theorem. 
Since $|h'(t)|\leqslant 1$, the Lipschitz constant $L$ of $|c(x,y,\beta)|$ can be taken to be the Lipschitz constant for the function $\langle \beta, \hat{x}(\beta) \rangle$. 
Recall that $\langle \beta, \hat{x}(\beta) \rangle = \langle \beta, x \rangle + \|\beta\|_*^s.$
So
\begin{itemize}
	\item The Lipschitz constant of the function $\beta\mapsto \langle \beta, x \rangle$, with respect to the 2-norm, is $\|x\|_2$;
	\item The Lipschitz constant for the function $\|\beta\|_*\mapsto \|\beta\|_*^s$ is $s(\sup \|\beta\|_*)^{s-1}$;
	\item The Lipschitz constant for the function $\beta\mapsto \|\beta\|_*$, with respect to the 2-norm, is $C$, as we have shown in the beginning of the proof that $\|\beta\|_* \leqslant C \|\beta\|_2$.
\end{itemize}
Therefore, $L$ is bounded by
$$
	\|x\|_2 + s(\sup \|\beta\|_*)^{s-1}\cdot C\leqslant R_1 + sC^sR_2^{s-1}.
$$
\end{proof}
\begin{corollary}\label{cor:pnorm_const}
When $\|x\|$ has the form $\|x\| = \|Ax\|_p$ and hence $d(x,y) = \frac{1}{r}\|A(x-y)\|_p^r$  with the additional assumption that $\sigma_d(A) \geqslant \varepsilon$ where $\sigma_d(A)$ denotes the smallest singular value of $A$, $c(x,y,\beta)$ has the following properties:
\begin{enumerate}
	\item $|c(x,y,\beta)|\leqslant 1+R_1R_2+ \varepsilon^{-s}d^{(s/q-s/2)_+} \cdot R_2^s$; %
	\item $c(x,y,\beta)$ is $L$-Lipschitz in $\beta$, where $L$ can be taken to be
	\[
	R_1 + s\cdot \varepsilon^{-s}d^{(s/q-s/2)_+}\cdot R_2^{s-1}.
	\]
Here $1/p+1/q=1, 1\leqslant p,q \leqslant +\infty$. \ar{Should this be $r$ and $s$?}
\end{enumerate}
\end{corollary}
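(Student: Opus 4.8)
The plan is to reduce everything to Theorem~\ref{thm:norm}, which already expresses both the bound on $|c(x,y,\beta)|$ and the Lipschitz constant $L$ in terms of \emph{any} constant $C>0$ satisfying $\|x\|_2 \leqslant C\|x\|$ for all $x\in\Rd$. Since here the abstract norm is the specific choice $\|x\| = \|Ax\|_p$, it suffices to pin down a good value of $C$ for this norm and substitute it into the two formulas from Theorem~\ref{thm:norm}; the statements then follow verbatim. So the entire content of the corollary is the computation of $C$.

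To compute $C$, I would chain two inequalities. First, since $A$ is invertible with smallest singular value $\sigma_d(A)\geqslant\varepsilon$, we have $\|Ax\|_2 \geqslant \sigma_d(A)\|x\|_2 \geqslant \varepsilon\|x\|_2$ for every $x$. Second, I would invoke the standard equivalence between the $\ell_p$ and $\ell_2$ norms on $\Rd$: for $1\leqslant p\leqslant q\leqslant\infty$ one has $\|v\|_q \leqslant \|v\|_p \leqslant d^{1/p-1/q}\|v\|_q$. Applied to the vector $v=Ax$, this lower-bounds $\|Ax\|_p$ by $\|Ax\|_2$, giving $\|Ax\|_p \geqslant \|Ax\|_2$ when $p\leqslant 2$, and $\|Ax\|_p \geqslant d^{1/p-1/2}\|Ax\|_2$ when $p\geqslant 2$ (note the exponent is negative in this regime). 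Combining each case with the singular-value bound yields $\|x\|_2 \leqslant \varepsilon^{-1}\|Ax\|_p$ when $p\leqslant 2$ and $\|x\|_2 \leqslant \varepsilon^{-1}d^{1/2-1/p}\|Ax\|_p$ when $p\geqslant 2$.

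Next I would unify the two cases. Using $1/p+1/q=1$, so that $1/2-1/p = 1/q-1/2$, both bounds are captured simultaneously by
\[
C = \varepsilon^{-1}\, d^{(1/q-1/2)_+},
\]
since the positive part correctly zeroes out the dimensional factor exactly when $p\leqslant 2$ (equivalently $q\geqslant 2$) and reinstates $d^{1/q-1/2}=d^{1/2-1/p}$ when $p\geqslant 2$. Substituting this $C$ into Theorem~\ref{thm:norm} gives $C^s = \varepsilon^{-s} d^{(s/q-s/2)_+}$, whence $|c(x,y,\beta)| \leqslant 1 + R_1R_2 + \varepsilon^{-s}d^{(s/q-s/2)_+}R_2^s$ and $L = R_1 + s\,\varepsilon^{-s}d^{(s/q-s/2)_+}R_2^{s-1}$, matching the claimed expressions.

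The only genuinely delicate point is orienting the $\ell_p$--$\ell_2$ inequality in the correct direction: because $C$ is defined through $\|x\|_2 \leqslant C\|x\| = C\|Ax\|_p$, we need a \emph{lower} bound on $\|Ax\|_p$, so that controlling $\|Ax\|_p$ from below translates into controlling $C$ from above. It is precisely the lower-bounding step that forces the dimensional factor $d^{1/p-1/2}$ to appear only in the regime $p\geqslant 2$, which is the structural reason the exponent takes the positive-part form $(\,\cdot\,)_+$ rather than being symmetric in $p$.
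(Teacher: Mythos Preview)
Your argument is correct and lands on the same constant $C=\varepsilon^{-1}d^{(1/q-1/2)_+}$ as the paper, after which both proofs finish identically by substituting into Theorem~\ref{thm:norm}. The only difference is the side on which the computation is carried out: the paper first identifies the dual norm as $\|\beta\|_* = \|(A^T)^{-1}\beta\|_q$, invokes Lemma~\ref{lem:dual} to reduce to bounding $\|(A^T)^{-1}\beta\|_q \leqslant C\|\beta\|_2$, and then chains the $q$-norm/$2$-norm inequality with the operator-norm bound $\|(A^T)^{-1}\|_2\leqslant\varepsilon^{-1}$; you instead stay on the primal side and chain the singular-value inequality $\|Ax\|_2\geqslant\varepsilon\|x\|_2$ with the $p$-norm/$2$-norm inequality directly. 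Your route is marginally more self-contained in that it never requires computing the dual norm of $\|A\cdot\|_p$ or appealing to Lemma~\ref{lem:dual}, while the paper's route makes the role of the dual exponent $q$ appear more naturally (rather than via the substitution $1/2-1/p=1/q-1/2$). Either way, the two computations are dual to each other and equally valid.
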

\begin{remark}
The condition on the smallest singular value of $A$ amounts to saying that $A$ is $\varepsilon$-far from being singular. Mathematically, the operator norm $\|(A^T)^{-1}\|_2 = \|A^{-1}\|_2 = \sigma_d(A)^{-1} \leqslant \varepsilon^{-1}$. \djs{CONSISTENCY} Recall that when $A$ is singular a strategic agent can move $x$ in some beneficial direction for free, resulting in her best response typically being undefined.
\end{remark}
\begin{proof}
We need the following lemma relating a general $r$-norm and 2-norm.
\begin{lemma}\label{lem:pnorm}
Let $x\in\mathbb{R}^d$. When $q\geqslant 2$, \ar{Should this be $r$?}
\[\|x\|_q\leqslant\|x\|_2.\]
When $0<q\leqslant2$,
\[\|x\|_q\leqslant d^{1/q-1/2}\|x\|_2.\] Both of these bounds are tight.
A simplified statement is:\\
For any $q>0$,
\[
\|x\|_q \leqslant d^{(1/q-1/2)_+} \cdot\|x\|_2.
\]
\end{lemma}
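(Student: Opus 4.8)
The plan is to split on whether $q \geq 2$ or $0 < q \leq 2$, prove the two displayed inequalities separately, and then observe that they glue into the simplified statement because $(1/q-1/2)_+$ equals $0$ exactly when $q \geq 2$ and equals $1/q-1/2$ when $q < 2$, so the factor $d^{(1/q-1/2)_+}$ selects the right exponent in each regime. Tightness I would establish at the end by exhibiting one extremal vector per regime.

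For $q \geq 2$, I would first normalize by homogeneity so that $\|x\|_2 = 1$; then every coordinate satisfies $|x_i| \leq 1$, and raising a number in $[0,1]$ to a larger power only decreases it, so $|x_i|^q \leq |x_i|^2$. Summing over $i$ gives $\sum_i |x_i|^q \leq \sum_i |x_i|^2 = 1$, hence $\|x\|_q \leq 1 = \|x\|_2$, and the inhomogeneous case scales back trivially. This is exactly the monotonicity of $\ell_q$-norms in $q$.

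For $0 < q \leq 2$, I would apply H\"older's inequality to the product $\sum_i |x_i|^q \cdot 1$ with the conjugate exponent pair $\tfrac{2}{q}$ and $\tfrac{2}{2-q}$ (these are $\geq 1$ and conjugate since $\tfrac{q}{2} + \tfrac{2-q}{2} = 1$, and both are valid whenever $0 < q < 2$, even for $q < 1$). This yields
\[
\sum_i |x_i|^q \leq \Big(\sum_i |x_i|^2\Big)^{q/2}\Big(\sum_i 1\Big)^{(2-q)/2} = \|x\|_2^q \, d^{(2-q)/2},
\]
and taking $q$-th roots gives $\|x\|_q \leq d^{1/q - 1/2}\|x\|_2$, using $\tfrac{2-q}{2q} = \tfrac{1}{q} - \tfrac{1}{2}$. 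The one boundary case needing a word is $q = 2$, where the exponent $\tfrac{2}{2-q}$ blows up; there I would simply note the bound reduces to the trivial equality $\|x\|_2 \leq d^0 \|x\|_2$, so the two regimes agree continuously at $q = 2$.

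Finally, for tightness I would take $x = e_1$ in the first regime, where $\|e_1\|_q = \|e_1\|_2 = 1$, and the all-ones vector $x = \mathbf{1}$ in the second, where $\|\mathbf{1}\|_q = d^{1/q}$ and $\|\mathbf{1}\|_2 = d^{1/2}$ give ratio exactly $d^{1/q-1/2}$. Honestly I expect no serious obstacle here: the only points requiring care are choosing the correct H\"older exponents and checking the $q = 2$ endpoint, after which the simplified one-line statement is immediate.
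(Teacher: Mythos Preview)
Your argument is correct and is the standard proof of this classical inequality. The paper itself does not prove this lemma at all: it is stated inside the proof of \Cref{cor:pnorm_const} as a known fact and then immediately applied, with no argument given. So there is nothing to compare against; your write-up simply fills in what the paper leaves implicit, and both the H\"older step for $0<q\leq 2$ and the monotonicity-of-$\ell_q$-norms step for $q\geq 2$, together with your tightness examples, are exactly the expected justifications.
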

By Theorem \ref{thm:norm}, it suffices to determine the constant $C$. When $\|x\|$ has the form $\|x\| = \|Ax\|_p$, its dual norm is $\|\beta\|_* = \|(A^T)^{-1}\beta\|_q$. By Lemma \ref{lem:dual}, it suffices to determine $C$ from expression
\[
\|(A^T)^{-1}\beta\|_q \leqslant C\|\beta\|_2.
\]

Apply Lemma \ref{lem:pnorm} and we have
\begin{align*}
\|(A^T)^{-1}\beta\|_q &\leqslant d^{(1/q-1/2)_+}\|(A^T)^{-1}\beta\|_2 \\
&\leqslant d^{(1/q-1/2)_+}\|(A^T)^{-1}\|_2\cdot\|\beta\|_2 \\
&\leqslant d^{(1/q-1/2)_+}\varepsilon^{-1}\cdot\|\beta\|_2.
\end{align*}
Hence $C$ can be taken to be $\varepsilon^{-1}d^{(1/q-1/2)_+}$. Plugging this into Theorem \ref{thm:norm} yields the claimed result.

\end{proof}
In the end, we present all the assumptions we have made and the resulting fully expanded regret bound.
\begin{theorem}\label{thm:final_rate}
Assume the following for $t=1,2,\ldots,n$:
\begin{itemize}
	\item The strategic agents have utility of the following form:
		$$u_t(x,\beta) = \innprod{\beta}{x} - d_t(x_t,x) = \innprod{\beta}{x} - \frac{1}{r}\|A_t(x_t-x)\|^r_p$$ with $1\leqslant p \leqslant +\infty$ and $1< r < +\infty$;
	\item $\|x_t\|_2 \leqslant R_1$;
	\item The smallest singular value of $A_t$, $\sigma_d(A_t)$ satisfies $\sigma_d(A_t) \geqslant \varepsilon$.
\end{itemize}

Then \Cref{alg:mix} incurs regret
\[
\mathbb{E}\left[ \mathcal{R}_S(A,B) \right] \leqslant \left\{
			\begin{array}{ll}
			O(\sqrt{n}), 		& \theta=0, \\
			O(n^{3/4}d^{(s/q-s/2)_+ + 1/2}), 		& \theta=\Omega(1), \\
			O(n^{\frac{3-\gamma}{4}}d^{(s/q-s/2)_+ + 1/2}),				& \theta=O(n^{-\gamma}).
			\end{array}
			\right.
\]
In particular, when $1\leqslant p\leqslant 2$, regardless of the power $r$, we have the regret bound simplified as:
\[
\mathbb{E}\left[ \mathcal{R}_S(A,B) \right] \leqslant \left\{
			\begin{array}{ll}
			O(\sqrt{n}), 		& \theta=0, \\
			O(\sqrt{d}n^{3/4}), 		& \theta=\Omega(1), \\
			O(\sqrt{d}n^{1-\frac{1+\gamma}{4}}),				& \theta=O(n^{-\gamma}).
			\end{array}
			\right.
\]
\end{theorem}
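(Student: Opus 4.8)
The plan is to obtain \Cref{thm:final_rate} by composition: we feed the explicit constants $M$ and $L$ computed for the $p$-norm cost family into the parametric regret bound already established, and then simply track the dependence on $n$ and $d$. Concretely, the proof of \Cref{cor:regret} gives, after the prescribed choices of $\eta$ and $\delta$ are substituted,
\[
\mathcal{R}_S \leqslant LR\sqrt{n(1-\theta)} + n^{3/4}\sqrt{dMLR(R+3)}\,\theta^{1/4},
\]
where $R$ is the $\ell_2$-radius of $K$ (a constant, equal to $R_2$ up to a constant factor). All the remaining work is to plug in $M = 1 + R_1R_2 + \varepsilon^{-s}d^{(s/q-s/2)_+}R_2^s$ and $L = R_1 + s\varepsilon^{-s}d^{(s/q-s/2)_+}R_2^{s-1}$ from \Cref{cor:pnorm_const}, treating $R_1,R_2,\varepsilon,r,s$ as constants.

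First I would isolate the dimension exponent. Writing $\alpha = (s/q-s/2)_+$, both $M$ and $L$ grow as $\Theta(d^{\alpha})$ (the $d$-free summands are lower order), so $\sqrt{dML} = \Theta(d^{\alpha + 1/2})$. Hence the zeroth-order (strategic) term in the bound becomes $O\!\left(n^{3/4} d^{\alpha + 1/2}\theta^{1/4}\right)$, while the first-order (non-strategic) term is $O\!\left(n^{1/2} d^{\alpha}\right)$.

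Next I would split into the three regimes. For $\theta = \Omega(1)$ the zeroth-order term dominates, giving $O(n^{3/4} d^{\alpha + 1/2})$. For $\theta = O(n^{-\gamma})$, substituting $\theta^{1/4} = O(n^{-\gamma/4})$ turns the zeroth-order term into $O(n^{(3-\gamma)/4} d^{\alpha+1/2})$; one checks this dominates the first-order term exactly when $\gamma \leqslant 1$, which is the relevant range, and $\tfrac{3-\gamma}{4} = 1 - \tfrac{1+\gamma}{4}$ reconciles this with the form stated in \Cref{cor:regret}. For $\theta = 0$ the zeroth-order term vanishes, and since every round is then non-strategic, the operative Lipschitz constant is that of $\beta \mapsto h(\langle\beta,x\rangle)$, bounded by $\|x\|_2\leqslant R_1$ and thus dimension-free; this yields $O(\sqrt{n})$ with no dependence on $d$. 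Together these give the first displayed bound.

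Finally, for the special case $1\leqslant p\leqslant 2$, the conjugate exponent satisfies $q\geqslant 2$, so $s/q - s/2\leqslant 0$ and $\alpha = (s/q-s/2)_+ = 0$ for every power $r>1$; substituting $\alpha = 0$ collapses $d^{\alpha+1/2}$ to $\sqrt{d}$ and recovers the second displayed bound. The argument is essentially bookkeeping, and I do not expect a genuine obstacle; the only points demanding care are (i) correctly extracting the dimension factor using $C = \varepsilon^{-1}d^{(1/q-1/2)_+}$ and the identity $s\cdot(1/q-1/2)_+ = (s/q-s/2)_+$, and (ii) recognizing that the $\theta=0$ case must invoke the dimension-free non-strategic Lipschitz bound $R_1$ rather than the strategic $L$, which is precisely what frees that regime of any $d$ factor.
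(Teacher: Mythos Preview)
Your proposal is correct and follows exactly the approach the paper intends: the paper states \Cref{thm:final_rate} without proof, presenting it as the ``fully expanded'' regret bound obtained by feeding the constants $M,L$ from \Cref{cor:pnorm_const} into the rate derived in the proof of \Cref{cor:regret}. Your write-up actually supplies more detail than the paper does---in particular, your observation (ii) that the $\theta=0$ case must use the non-strategic Lipschitz bound $R_1$ (rather than the strategic $L$, which carries a $d^{\alpha}$ factor) is exactly what is needed to make that regime dimension-free, a point the paper leaves implicit.
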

\begin{remark} In terms of the dependence of $\mathbb{E}\left[ \mathcal{R}_S(A,B) \right]$ on the dimension $d$, we can see from the above theorem:
\begin{itemize}
	\item When $1\leqslant p\leqslant 2$, regardless of the power $r$, we achieve small dependence on $d$;
	\item When $p>2$, the larger $p$ is and the smaller power $r$, the more severe our dependence on $d$ becomes.
	\end{itemize}
\end{remark} 
\section{Missing Proofs in \Cref{sec:convex}}

\subsection*{Proof of \Cref{claim:finite}}\label{pf:{claim:finite}}
\begin{claim-non}

Under the assumptions in Theorem \ref{thm:convex}, $f^*(\beta)$ is finite for all $\beta\in\Rd$.
\end{claim-non}

\begin{proof}

We need the following lemma:

\begin{lemma}[\citet{rockafellar2015convex}, Theorem 10.1, Corollary 10.1.1]
A convex function finite on all of $\Rd$ is necessarily continuous.
\end{lemma}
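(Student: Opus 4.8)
The plan is to prove the stronger statement that a finite convex function $f:\Rd\to\mathbb{R}$ is \emph{locally Lipschitz}, which immediately yields continuity, since continuity is a local property. Fix an arbitrary point $x_0\in\Rd$; it suffices to exhibit a neighborhood of $x_0$ on which $f$ is Lipschitz, and then $x_0$ was arbitrary.

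First I would show that $f$ is bounded \emph{above} on a neighborhood of $x_0$. Choose $d+1$ affinely independent points $v_0,\dots,v_d$ whose convex hull $\Delta$ is a nondegenerate simplex containing $x_0$ in its interior (possible precisely because $f$, being finite everywhere, is defined at all these points). Every $x\in\Delta$ is a convex combination $x=\sum_i\lambda_i v_i$ with $\lambda_i\geq0$ and $\sum_i\lambda_i=1$, so Jensen's inequality (i.e.\ iterated convexity) gives $f(x)\leq\sum_i\lambda_i f(v_i)\leq\max_i f(v_i)=:M$. Hence $f\leq M$ on $\Delta$, and in particular on some ball $B(x_0,2r)\subseteq\Delta$.

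Next I would upgrade this to a two-sided bound on the same ball via a reflection argument. For any $x\in B(x_0,2r)$, its reflection $x'=2x_0-x$ again lies in $B(x_0,2r)$, and since $x_0=\frac{1}{2}(x+x')$, convexity gives $f(x_0)\leq\frac{1}{2}f(x)+\frac{1}{2}f(x')$, so $f(x)\geq 2f(x_0)-f(x')\geq 2f(x_0)-M$. Combined with the upper bound, $f$ is bounded on $B(x_0,2r)$, say $|f|\leq K$ there.

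Finally comes the crux: converting local boundedness into a local Lipschitz estimate. Given distinct $x,y\in B(x_0,r)$, I extend the chord past $y$ to the point $z=y+r\frac{y-x}{\|y-x\|}$, which satisfies $\|z-x_0\|\leq\|y-x_0\|+r<2r$, so $z\in B(x_0,2r)$. A short computation shows $y=(1-\lambda)x+\lambda z$ with $\lambda=\frac{\|y-x\|}{\|y-x\|+r}\leq\frac{\|y-x\|}{r}$, so convexity gives $f(y)-f(x)\leq\lambda\bigl(f(z)-f(x)\bigr)\leq\frac{2K}{r}\|y-x\|$. Swapping the roles of $x$ and $y$ yields the matching inequality, hence $|f(x)-f(y)|\leq\frac{2K}{r}\|x-y\|$ on $B(x_0,r)$, establishing local Lipschitzness and thus continuity at $x_0$. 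I expect this last chord-stretching step to be the main obstacle: the whole content is that a merely \emph{one-sided} bound together with convexity already controls the oscillation of $f$, and the care lies in arranging the geometry so that $y$ sits strictly inside the segment $[x,z]$ with a convex coefficient $\lambda$ that is controlled linearly by $\|x-y\|$.
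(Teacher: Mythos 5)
Your proof is correct, and it is worth noting that the paper itself supplies no argument here: the lemma is invoked as a black box, cited to Rockafellar (Theorem 10.1 and Corollary 10.1.1), whose route goes through the general theory of continuity of convex functions relative to the relative interior of their effective domain. What you give instead is the standard self-contained argument, and it in fact proves the strictly stronger statement that a finite convex function on $\Rd$ is locally Lipschitz (this is essentially Rockafellar's Theorem 10.4, not just 10.1). All three steps check out: the simplex/Jensen step is valid because a nondegenerate simplex containing $x_0$ in its interior always exists and $f$ is finite at its vertices; the reflection step correctly converts the one-sided bound $f\leq M$ on $B(x_0,2r)$ into $|f|\leq K$ there, using that $x'=2x_0-x$ stays in the same ball; and the chord-stretching step is the genuine content, with the coefficient $\lambda=\frac{\|y-x\|}{\|y-x\|+r}\leq\frac{\|y-x\|}{r}$ computed correctly and $z\in B(x_0,2r)$ verified via $\|z-x_0\|\leq\|y-x_0\|+r<2r$, so the bound $|f(x)-f(y)|\leq\frac{2K}{r}\|x-y\|$ on $B(x_0,r)$ follows by symmetry. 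Compared with the paper's citation, your approach buys self-containedness and a quantitative Lipschitz estimate at the cost of some geometric bookkeeping; for the paper's purposes (Claim~\ref{claim:finite} only needs continuity of $f$ on the compact sphere to conclude finiteness of $f^*$), either route suffices, but yours would let one extract explicit moduli if desired.
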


This lemma shows continuity of $f$ under our assumptions.

In particular, letting $\alpha\downarrow0$ in the homogeneity condition $f(\alpha x) = \alpha^pf(x)$ implies that $f(0)=0$.

Let $S^{d-1}$ be the unit sphere in $\Rd$. If we take the supremum to find the convex conjugate $f^*$ with respect to polar coordinates, we have
\begin{align*}
f^*(\beta):=\sup_{x\in \Rd} \innprod{\beta}{x} - f(x) &= \sup_{v\in S^{d-1}}(\sup_{\rho\geqslant0} (\innprod{\beta}{\rho v} - f(\rho v)))\\
&= \sup_{v\in S^{d-1}}(\sup_{\rho r\geqslant0} (\rho \innprod{\beta}{v} - \rho^p f(v)))
\end{align*}
Since $v\in S^{d-1}$, it cannot be $0$. By our first assumption on $f$, $f(v) > 0$. The inner supremum can be computed explicitly:
\[
\sup_{\rho r\geqslant0} (\rho \innprod{\beta}{v} - \rho^pf(v))= \frac{r^{1-s}}{s}\innprod{\beta}{v}^s f(v)^{1-s}.
\]
By the continuity lemma, the above expression is a continuous function of $v$. Taking the supremum over a compact set $S^{d-1}$ can only yield a finite value, hence
\[
f^*(\beta)=\sup_{v\in S^{d-1}}\frac{r^{1-s}}{s}\innprod{\beta}{v}^s f(v)^{1-s} < +\infty.
\]
Therefore, $f^*$ is finite everywhere. \ar{It looks like $p,q$ in this proof should be switched to $r,s$ like elsewhere in the document.}\djs{FIXED}
\end{proof}

\subsection*{Proof of \Cref{claim:homo}}\label{pf:{claim:homo}}
\begin{claim-non}

    If $f:\Rd\to\mathbb{R}$ is convex and homogeneous of degree $r>1$, then $f^*:\Rd\to \mathbb{R}$ is convex and homogeneous of degree $s>1$, where $\frac{1}{r}+\frac{1}{s}=1$.

\end{claim-non}

\begin{proof}
    
    We begin by making the following observations regarding convex conjugates:
    \begin{eqnarray*}
        g(x) = f(\alpha x) & \Rightarrow & g^*(\beta) = f^*(\alpha^{-1}\beta)\\
        g(x) = \alpha f(x) & \Rightarrow & g^*(\beta) = \alpha f^*(\alpha^{-1}\beta)
    \end{eqnarray*} 
    These two rules follow directly from taking suprema on both sides of these equalities:
    \begin{eqnarray*}
        \innprod{\beta}{x} - f(\alpha x) & = & \innprod{\alpha^{-1}\beta}{\alpha x} - f(\alpha x) \\
        \innprod{\beta}{x} - \alpha f(x) & = & \alpha [\innprod{\alpha^{-1}\beta}{ x} - f( x) ]
    \end{eqnarray*} 
    
    So taking the convex conjugate on both sides of $f(\alpha x) = \alpha^r f(x)$ yields:
    
    \begin{equation}\label{eqn:convex_rule}
        f^*(\alpha^{-1}\beta) = \alpha^r f^*(\alpha^{-r}\beta)
    \end{equation}
    
    Let's do two change of variables, $v = \alpha^{-1}\beta$ and $\gamma = \alpha ^{1-r}$. Note that $\gamma v = \alpha^{-r}\beta$. Plugging all these into \Cref{eqn:convex_rule} changes variables from $\alpha,\beta$ to $\gamma,v$, as illustrated below:
    
    \begin{eqnarray*}
        f^*(v) & = & \alpha^r f^*(\gamma v)\\
        f^*(\gamma v) & = & \alpha^{-r} f^*(v)\\
        & = & \gamma^{\frac{r}{r-1}} f^*(v) = \gamma ^ s f^*(v)
    \end{eqnarray*}

\end{proof}


\subsection*{Proof of \Cref{claim:f}}\label{pf:{claim:f}}
\begin{claim-non}

    For any abstract norm $\|\cdot\|$ on $\Rd$ and any $r>1$
    $$f(x) = \frac{1}{r} \|x\|^r$$ satisfies the conditions of Theorem \ref{thm:convex}. Namely,
    \begin{itemize}
        \item $f(x)>0$ for all $x\neq0$;
        \item $f$ is convex over $\Rd$;
        \item $f$ is positive homogeneous of degree $r>1$.
    \end{itemize}
    Furthermore, let $\|\cdot\|_*$ be the dual norm of $\|\cdot\|$. $f^*$ has the following form:
    \[
    f^*(\beta) = \frac{1}{s} \|\beta\|_*^s.
    \]
    Here $1/r+1/s=1.$

\end{claim-non}

\begin{proof}
The conditions can be easily checked. We only prove the expression for $f^*$.

By definition, we have that $\innprod{x}{\beta} \leqslant||x|| \cdot ||y||_*$, so
$$\innprod{x}{\beta} - \frac{1}{r}||x||^r \leqslant||x||\cdot||\beta||_* - \frac{1}{r}||x||^r$$
for any $x\in\mathbb{R}^d$.  We treat the right-hand side as a function of $\|x\|$, which is upper-bounded by $\frac{1}{s}||\beta||_*^s$. We can rewrite the inequality as
$$\innprod{x}{\beta} - \frac{1}{r}||x||^r\leqslant\frac{1}{s}||\beta||_*^s$$

Since the left-hand side is an upper bound of $f^*(\beta)$, $f^*(\beta) \leqslant\frac{1}{s}||\beta||_*^s$.

Next, let $\tilde{x}\in\mathbb{R}^d$ be such that $\innprod{\tilde{x}}{\beta} = ||\tilde{x}||\cdot||\beta||_*$, appropriately scaled such that $||\tilde{x}||^r=||\beta||_*^s$.  Then, for this particular $\tilde{x}$,
$$\innprod{\tilde{x}}{\beta}-\frac{1}{r}||\tilde{x}||^r = \frac{1}{s}||\beta||_*^s$$

Therefore, we have the inequality
$$f^*(\beta)\geqslant \frac{1}{s}||\beta||_*^s$$

Since we now have inequalities in both directions, we must have that $f^*(\beta)=\frac{1}{s}||\beta||_*^s$, as desired.
\end{proof}

\end{document}
